\definecolor{myblue}{rgb}{.8, .8, 1}
\definecolor{pastelblue}{RGB}{76,113,175}
\definecolor{pastelgreen}{RGB}{144,238,144}
\definecolor{pastelred}{RGB}{196,78,82}
\definecolor{pastelgrey}{RGB}{230,230,230}
\definecolor{pastelbeige}{RGB}{243,236,221}
\definecolor{pastelpurple}{RGB}{154,139,192}
\definecolor{salmon}{RGB}{250, 128, 114}
\definecolor{darkgreen}{rgb}{0,0.6,0}
\definecolor{darkred}{rgb}{0.5,0,0}
\definecolor{verylightgreen}{HTML}{F6FFF9}
\definecolor{verylightred}{HTML}{FFF4F3}
\definecolor{verylightgray}{HTML}{F4F6F6}
\definecolor{babyblueeyes}{rgb}{0.63, 0.79, 0.95}
\definecolor{lightpink}{rgb}{1.00, 0.714, 0.757}
\tikzset{
    -Latex,auto,node distance =1 cm and 1 cm,semithick,
    state/.style ={ellipse, draw, minimum width = 0.7 cm},
    point/.style = {circle, draw, inner sep=0.04cm,fill,node contents={}},
    bidirected/.style={Latex-Latex,dashed},
    el/.style = {inner sep=2pt, align=left, sloped}
}
\newtheorem{theorem}{Theorem}
\def\thmt@refnamewithcomma #1#2#3,#4,#5\@nil{%
	\@xa\def\csname\thmt@envname #1utorefname\endcsname{#3}%
	\ifcsname #2refname\endcsname
	\csname #2refname\expandafter\endcsname\expandafter{\thmt@envname}{#3}{#4}%
	\fi}
\Crefname{conjecture}{Conjecture}{Conjectures}
\Crefname{definition}{Definition}{Definitions}
\Crefname{observation}{Observation}{Observations}
\Crefname{assumption}{Assumption}{Assumptions}
\Crefname{axiom}{Axiom}{Axioms}
\Crefname{case}{Case}{Cases}
\Crefname{claim}{Claim}{Claims}
\Crefname{conclusion}{Conclusion}{Conclusions}
\Crefname{condition}{Condition}{Conditions}
\Crefname{criterion}{Criterion}{Criteria}
\Crefname{exercise}{Exercise}{Exercises}
\Crefname{example}{Example}{Examples}
\Crefname{notation}{Notation}{Notations}
\Crefname{problem}{Problem}{Problems}
\Crefname{property}{Property}{Properties}
\Crefname{remark}{Remark}{Remarks}
\Crefname{solution}{Solution}{Solutions}
\Crefname{summary}{Summary}{Summaries}
\Crefname{motivation}{Motivation}{Motivations}
\Crefname{query}{Query}{Queries}
\newcommand*\dbar[1]{\overline{\overline{\lower0.2ex\hbox{$#1$}}}}
\def\cB{{\mathcal{B}}}
\def\cL{{\mathcal{L}}}
\def\cN{{\mathcal{N}}}
\def\cQ{{\mathcal{Q}}}
\def\cS{{\mathcal{S}}}
\def\cT{{\mathcal{T}}}
\def\cU{{\mathcal{U}}}
\DeclareFontFamily{U}{BOONDOX-calo}{\skewchar\font=45 }
\DeclareFontShape{U}{BOONDOX-calo}{m}{n}{
  <-> s*[1.05] BOONDOX-r-calo}{}
\DeclareFontShape{U}{BOONDOX-calo}{b}{n}{
  <-> s*[1.05] BOONDOX-b-calo}{}
\DeclareMathAlphabet{\mathcalb}{U}{BOONDOX-calo}{m}{n}
\SetMathAlphabet{\mathcalb}{bold}{U}{BOONDOX-calo}{b}{n}
\DeclareMathAlphabet{\mathbcalb}{U}{BOONDOX-calo}{b}{n}
  \def\\{}%
  \def\texttt#1{<#1>}%
\renewcommand{\paragraph}[1]{{\noindent \textbf{#1.}}}
\let\originalleft\left
\let\originalright\right
\renewcommand{\left}{\mathopen{}\mathclose\bgroup\originalleft}
\renewcommand{\right}{\aftergroup\egroup\originalright}
\global\long\def\S{\mathcal{S}}
\global\long\def\inner#1#2{\left\langle \vphantom{1^2} #1, #2\right\rangle}
\definecolor{antiquefuchsia}{rgb}{0.57, 0.36, 0.51}
\definecolor{amethyst}{rgb}{0.6, 0.4, 0.8}
\newcommand{\deriv}[2]{\frac{\partial #1}{\partial #2}}
\newcommand{\dderiv}[2]{\dfrac{\partial #1}{\partial #2}}
\newcommand{\var}{{\rm I\kern-.3em D}}
\newcommand{\cond}{\,|\,}
\newcommand{\Normal}{\mathcal{N}}
\newcommand{\eps}{\varepsilon}
\newtheorem*{theorem*}{Theorem}
\newtheorem*{proposition*}{Proposition}
\newtheorem*{example*}{Example}
\DeclareMathSymbol{\shortminus}{\mathbin}{AMSa}{"39}
\newcommand{\Dt}{G_t}
\newcommand{\bbP}{\mathbb{P}}
\newcommand{\bbPref}{\mathbb{P}^{\text{ref}}}
\newcommand{\bbQ}{\mathbb{Q}}
\crefname{equation}{Eq.}{Eqs.}
\crefname{section}{Sec.}{Secs.}
\crefname{corollary}{Cor.}{Cors.}
\crefname{proposition}{Prop.}{Props.}
\crefname{appendix}{App.}{Apps.}
\crefname{theorem}{Thm.}{Thms.}
\crefname{figure}{Fig.}{Figs.}
\crefname{tabular}{Tab.}{Tabs.}
\crefname{algorithm}{Alg.}{Algs.}
\crefname{alg}{Alg.}{Algs.}
\newcommand\sbullet[1][.5]{\mathbin{\vcenter{\hbox{\scalebox{#1}{$\bullet$}}}}}
\newacronym{SB}{SB}{Schrödinger Bridge}
\newacronym{FPE}{FPE}{Fokker-Planck equation}
\newacronym{SDE}{SDE}{stochastic differential equation}
\newacronym{MCMC}{MCMC}{Markov Chain Monte Carlo}
\newacronym{MH}{MH}{Metropolis-Hastings}
\newacronym{MD}{MD}{molecular dynamics}
\newacronym{TPS}{TPS}{\textit{transition path sampling}}
\newacronym{RMSD}{RMSD}{root mean square deviation}
\newcommand{\paraorsubsec}[1]{\paragraph{#1}}
\newcommand{\condt}{{t|0,T}}
\newcommand{\ifbm}[1]
{#1}
\newcommand{\btofxt}{\ifbm{b_t}(\ifbm{x_t})}
\newcommand{\tpsx}{\ifbm{x}}
\newcommand{\tpsy}{\ifbm{y}}
\newcommand{\prob}{\rho}
\newcommand{\condp}{{\prob}}
\newcommand{\setB}{\cB}
\newcommand{\mass}{M}
\newcommand{\hset}{h_{\setB}}
\newcommand{\hpt}{h_{B}}
\newcommand{\tpsxbar}{\bar{x}} 
 \newcommand{\mom}{\bar{v}}
\newcommand{\vq}{v_\condt^{(q,\theta)}}
\newcommand{\vqtheta}{v_\condt^{(q,\theta)}}
\newcommand{\uq}{u_\condt^{(q,\theta)}}
\newcommand{\xdim}{D}
\newcommand{\doobdim}{N}
\newcommand{\xdomain}{\mathbb{R}^{\xdim}}
\newcommand{\bbR}{\mathbb{R}}
\newcommand{\bbI}{\mathbb{I}}
\newcommand{\secvheader}{\vspace*{-.2cm}}
\definecolor{cite_color}{HTML}{114083}
\definecolor{url_color}{RGB}{153, 102,  0}
\newlist{myenum}{enumerate}{1}
\setlist[myenum,1]{label={\arabic*.},
                   ref  ={\arabic*}}
\crefname{myenumi}{Challenge}{Challenge}
\newtcolorbox{mymath}[1][]{%
    nobeforeafter,  tcbox raise base,
    colframe=white!30!black,
    colback=olive!3,
    boxrule=0.2mm,
    boxsep=5pt,
    #1}
\newenvironment{mytheorem}{\begin{theorem}}{\end{theorem}}
\def\ourCode{{\href{https://github.com/plainerman/variational-doob}{\texttt{https://github.com/plainerman/variational-doob}}}} 
\title{Doob's Lagrangian: A Sample-Efficient Variational Approach to Transition Path Sampling}
\author{
  Yuanqi Du\thanks{Equal contribution. ~
  Correspondence: \texttt{\href{mailto:k.necludov@gmail.com}{k.necludov@gmail.com} (Kirill Neklyudov)}}\,\,~$^{1}$ \And Michael Plainer\footnotemark[1]\,\,~$^{2,3,4,5}$ \And Rob Brekelmans\footnotemark[1]\,\,~$^{6}$ \And Chenru Duan~$^{7,8}$ \And Frank Noé~$^{4,9,10}$ \And Carla P. Gomes~$^{1}$ \And Alán Aspuru-Guzik~$^{6,11}$ \And Kirill Neklyudov~
  $^{12,13}$ \\ 
  \And \textnormal{
  $^{1}$Cornell University \quad
  $^{2}$Zuse School ELIZA \quad
  $^{3}$Technische Universität Berlin
  } \\
  \textnormal{
  $^{4}$Freie Universität Berlin \quad
  $^{5}$Berlin Institute for the Foundations of Learning and Data
  } \\
  \textnormal{
  $^{6}$Vector Institute \quad
  $^{7}$Massachusetts Institute of Technology \quad
  $^{8}$Deep Principle, Inc. \quad
  }\\
  \textnormal{
  $^{9}$Rice University \quad
  $^{10}$Microsoft Research AI4Science \quad
  $^{11}$University of Toronto
  }\\
  \textnormal{
  $^{12}$Université de Montréal \quad
  $^{13}$Mila Quebec AI Institute}
}
\begin{document}
\maketitle

\begin{abstract}
Rare event sampling in dynamical systems is a fundamental problem arising in the natural sciences, which poses significant computational challenges due to an exponentially large space of trajectories. For settings where the dynamical system of interest follows a Brownian motion with known drift, the question of conditioning the process to reach a given endpoint or desired rare event is definitively answered by Doob's $h$-transform. However, the naive estimation of this transform is infeasible, as it requires simulating sufficiently many forward trajectories to estimate rare event probabilities. In this work, we propose a variational formulation of Doob's $h$-transform as an optimization problem over trajectories between a given initial point and the desired ending point. To solve this optimization, we propose a simulation-free training objective with a model parameterization that imposes the desired boundary conditions by design. Our approach significantly reduces the search space over trajectories and avoids expensive trajectory simulation and inefficient importance sampling estimators which are required in existing methods. We demonstrate the ability of our method to find feasible transition paths on real-world molecular simulation and protein folding tasks.
\end{abstract}

\secvheader
\secvheader
\section{Introduction}
\secvheader

Conditioning a stochastic process to 
obey a particular endpoint distribution, satisfy desired terminal conditions, or observe a rare event is a problem with a long history \citep{schrodinger1932theorie, doob1957conditional} and wide-ranging applications from generative modeling \citep{de2021diffusion, chen2021likelihood, liu2022let, liu2023learning, somnath2023aligned} to molecular simulation \citep{anderson2007predicting, wu2022diffusion, plainer2023transition, holdijk2024stochastic}, drug discovery ~\citep{kirmizialtin2012how, kirmizialtin2015enzyme, dickson2018}, and materials science ~\citep{xi2013, selli2016hierarchical, dilawar2016}.

\paragraph{Transition Path Sampling}
In this work, we take a particular interest in the problem of \gls{TPS} in computational chemistry \citep{dellago2002transition, weinan2010transition}, which attempts to describe how molecules transition between local energy minima or metastable states under random fluctuations or the influence of external forces. Understanding such transitions has numerous applications for combustion, catalysis, battery, material design, and protein folding \citep{zeng2020complex,klucznik2024computational,blau2021chemically,noe2009constructing,escobedo2009}. While the TPS problem is often framed as finding the `most probable path' transitioning between states \citep{durr1978onsager, vanden2008geometric}, we build upon connections between TPS and Doob's $h$-transform \citep{das2019variational,  das2021reinforcement, das2022direct, koehl2022sampling, singh2023variational} and seek to match the \textit{full} posterior distribution over conditioned processes.

\begin{figure}[t]\vspace*{-.3cm}
    \centering
    \includegraphics[width=\linewidth]{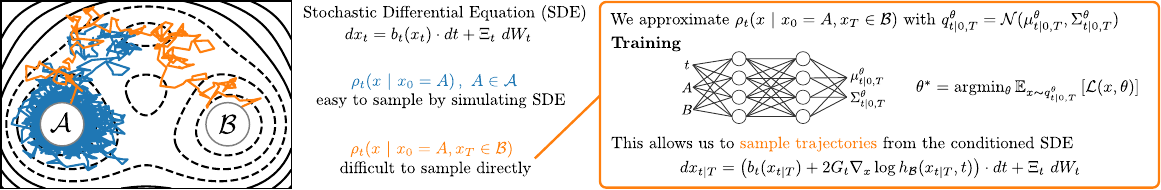}
    \caption{
     Given reference dynamics, transition path sampling seeks to capture the conditional or posterior distribution over paths which reach a terminal set $x_T \in \mathcal{B}$.    However, simulating the reference dynamics (blue) can be wasteful since we rarely obtain paths (orange) which reach (the vicinity of) the terminal set $\mathcal{B}$.   This is a major challenge for techniques based on importance sampling or Monte Carlo estimation, even when adding a control term to the reference dynamics.   By contrast, our approach optimizes a tractable variational distribution over transition paths with 
     a parameterization which satisfies the initial and terminal conditions by design.
    }
    \label{fig:main-figure}
    \vspace*{-.4cm}
\end{figure}

\paragraph{Doob's $h$-Transform} 
For Brownian motion diffusion processes, conditioning is known to be achieved by Doob's $h$-transform \citep{doob1957conditional, sarkka2019applied}. However, solving this problem amounts to estimating rare event probabilities or matching a complex target distribution. Approaches which involve simulation of trajectories to construct Monte Carlo expectations or importance sampling estimators \citep{papaspiliopoulos2012importance, schauer2017guided, yan2022learning, holdijk2024stochastic} can be extremely inefficient if the target event is rare or endpoint distribution is difficult to match. Recent methods based on score matching \citep{heng2021simulating} or nonlinear Feynman-Kac formula \citep{chopin2023computational} still require simulation during optimization.

\paragraph{Variational Formulation of Doob's $h$-Transform} In this work, we propose a variational formulation of Doob's $h$-transform as the solution to an optimization on the space of paths of probability distributions.    We focus on solving for the Doob transform conditioning on a particular terminal point, which is natural in the TPS setting (see \cref{fig:main-figure}).   Taking inspiration from recent bridge matching methods \citep{peluchetti2021nondenoising, peluchetti2023diffusion, liu2022let, lipman2022flow, shi2023diffusion, liu2023generalized}, we propose a parameterization with the following attractive features.
\begin{enumerate}
\item \textbf{Every Sample Matters.} 
    In contrast to most existing approaches, our training method is \textit{simulation-free}, thereby avoiding computationally wasteful simulation methods to estimate rare-event probabilities and inefficient importance or rejection sampling.   We thus refer to our approach as being \textit{sample-efficient}.    
    \item \textbf{Optimization over Sampling.}
    We propose an expressive variational family of approximations to the conditioned process, which are tractable to sample and can be optimized using neural networks with end-to-end backpropagation.
    \item \textbf{Problem-Informed Parameterization.} 
    Our parameterization enforces the boundary conditions \textit{by design}, thereby reducing the search space for optimization and efficiently making use of the conditioning information.   
\end{enumerate}

We begin by linking the problem of transition path sampling to the Doob's $h$-transform and recalling background results in \cref{sec:background}.  We present our variational formulation in \cref{sec:lagrangian} and detail our optimization algorithm throughout \cref{sec:parameterization}. {We demonstrate the ability of our approach to achieve comparable performance to \gls{MCMC} methods with notably improved efficiency on synthetic, and real-world molecular simulation tasks in \cref{sec:experiments}.}
\newcommand{\ifsmall}{\small}
\secvheader
\secvheader
\section{Background}\label{sec:background}
\secvheader
\subsection{Transition Path Sampling}\label{sec:tps}
\secvheader
Consider a forward or reference stochastic process with states $\tpsx_t$ and the density of transition probability $\prob_{t+dt}(y|\tpsx_t= \tpsx ) \coloneqq \prob(\tpsx_{t+dt} = \tpsy \cond \tpsx_t= \tpsx)$. Starting from an initial point $\tpsx_0 = A$, the probability density of a discrete-time path is given as
\ifsmall 
\begin{align}
    \prob(\tpsx_T,\ldots, \tpsx_{dt}\cond \tpsx_0 = A) &= \prod_{t=dt}^{T-dt} \prob(\tpsx_{t+dt}\cond \tpsx_t) \cdot \prob(\tpsx_{dt}\cond \tpsx_0 = A) .
\end{align}
\normalsize
The problem of rare event sampling aims to condition this reference stochastic process on some event at time $T$, for example, that the final state belongs to a particular set $\tpsx_T \in \setB$.   
We are interested in sampling from the entire \textit{transition path}, namely the 
posterior distribution over intermediate states
\ifsmall 
\begin{align}
    \prob(\tpsx_{T-dt},\ldots, \tpsx_{dt}\cond \tpsx_0 = A, \tpsx_T \in \setB) = \frac{\prob(\tpsx_T \in \setB,\tpsx_{T-dt}\ldots,\tpsx_{dt}\cond \tpsx_0=A)}{\prob(\tpsx_T \in \setB \cond \tpsx_0 = A)} . \label{eq:transition_path}
\end{align}
\normalsize
Moving to continuous time, we focus on the transition path sampling problem in the case where the reference process is given by a Brownian motion.
In particular, we are motivated by applications in computational chemistry \citep{dellago2002transition, weinan2010transition}, where the reference process is given by molecular dynamics following either overdamped Langevin dynamics,
\small
\begin{align}
        \ifbm{d\tpsx_t} &= - ( \gamma \mass)^{-1} \nabla_x U(\ifbm{\tpsx_t}) \cdot dt + 
         (\gamma \mass)^{-\sfrac{1}{2}} 
         \sqrt{2 {k_B \mathcal{T}}} \cdot \ifbm{dW_t}\,, 
    \label{eq:first_order}
\end{align}
\normalsize
{or the second-order Langevin dynamics with spatial coordinates $\tpsxbar_t$ and velocities $\ifbm{\mom_t}$,}
\ifsmall
\begin{align}
   \ifbm{d\tpsxbar_t} ~&= \ifbm{\mom_t} \cdot dt \,,  \qquad  \ifbm{d\mom_t} ~= \Big(-\mass^{-1}  \nabla_x U(\tpsxbar_t) - \gamma \mom_t \Big) \cdot dt + \mass^{-\sfrac{1}{2}} \sqrt{2 \gamma k_B \mathcal{T}} \cdot \ifbm{dW_t} \,. \label{eq:second_order}
\end{align}
\normalsize
for a potential energy function $U$, where $\ifbm{W_t}$ denotes the Wiener process. Note that $k_B \cT$ is the Boltzman constant times temperature, $\mass$ is the mass matrix, and $\gamma$ is the friction coefficient.

\secvheader
\subsection{Doob's \texorpdfstring{$h$}{h}-transform}\label{sec:doobs-h-transform}
\secvheader

Doob's $h$-transform addresses the question of conditioning a reference Brownian motion to satisfy a terminal condition such as $\tpsx_T \in \setB$, thereby providing an avenue to solve the transition path sampling problem described above. Without loss of generality, and to provide a unified treatment of the dynamics in \labelcref{eq:first_order}--\labelcref{eq:second_order}, we consider the forward or reference \gls{SDE},
\begin{align}
  {{\bbPref_{0:T}:}} \quad \qquad \quad \qquad 
    \ifbm{dx_t}= \btofxt \cdot dt + \Xi_t~ \ifbm{dW_t} \,, \;\; \qquad \quad \tpsx_0 \sim \prob_0\,, \qquad \qquad \qquad \label{eq:ref_sde}
\end{align} 
with drift vector field $b_t: \mathbb{R}^{\doobdim} \to \mathbb{R}^{\doobdim}$ and diffusion coefficient matrix $\Xi_t \in \mathbb{R}^{\doobdim \times \doobdim}$ 
such that $\Dt \coloneqq \frac{1}{2} \Xi_t \Xi_t^T$ is positive definite.\footnote{Note that the second-order dynamics in \labelcref{eq:second_order} can be represented using
\small 
\begin{align*}
    \ifbm{\tpsx_t} = \begin{bmatrix}
        \tpsxbar_t \\
        \ifbm{\mom_t}
    \end{bmatrix}\,,\quad\;\;
    \ifbm{b_t}(\tpsx_t) = \begin{bmatrix}
        \ifbm{\mom_t}
        \\
        -\mass^{-1}  \nabla_x U(\tpsxbar_t) - \gamma \mom_t
    \end{bmatrix}\,,\;\;\quad 
    \ifbm{\Dt} = 
    \begin{bmatrix}
        0 & 0\\
        0 & \mass^{-\sfrac{1}{2}} \sqrt{2 \gamma k_B \mathcal{T}}
    \end{bmatrix}\,.  
\end{align*}
\normalsize
}
We denote the induced path measure as $\bbPref_{0:T} \in \mathcal{P}(\mathcal{C}([0,T]\rightarrow \bbR^{\doobdim}))$, i.e. a measure over continuous functions from time to $\bbR^{\doobdim}$.

Remarkably, Doob's $h$-transform \citep[Sec. 7.5]{doob1957conditional,sarkka2019applied} shows that conditioning the reference process \labelcref{eq:ref_sde} on $\tpsx_T \in \setB$ results in another Brownian motion process.

\begin{restatable}{proposition}{jamison}\label{prop:doob_sde}
{\textup{[\citet[Thm. 2]{jamison1975markov}]}}
Let $\hset(\tpsx,t) \coloneqq \prob_T(\tpsx_T \in \setB\cond \tpsx_t= \tpsx)$ denote the conditional transition  probability of the reference process in \labelcref{eq:ref_sde}.  Then,
\begin{align}
 \hspace*{-.2cm} {{\bbP^*_{0:T}:}} \qquad   \ifbm{d\tpsx_{t|T}} &= \Big(
    b_t(x_{t|T}) + 2 \Dt \nabla_{\tpsx} \log \hset(\tpsx_{t|T}, t)
    \Big)\cdot dt 
    + \Xi_t ~ \ifbm{dW_t}\, \qquad x_0 \sim \rho_0 \quad \label{eq:h_sde}
\intertext{
where we use $x_{t|T}$ to denote a conditional process.   The \gls{SDE} in \labelcref{eq:h_sde} is associated with the following transition probabilities for $s < t <T$,}
       &{ \prob_{t}(\tpsy \cond  x_s = \tpsx, \tpsx_T \in \setB ) = \frac{\hset(\tpsy, t)  }{\hset(\tpsx, s)}  \prob_{t}(\tpsy \cond x_s = \tpsx)}
    ,  \label{eq:conditioned}
\end{align}
 Note that all of our subsequent results hold for the case when $\mathcal{B}$ is a point-mass, with the only change being that the $h$-function becomes a density, $\hpt(x,t) = \rho_T(B\cond x_t = x)$.
\end{restatable}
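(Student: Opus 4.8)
The plan is to establish the two displayed claims in order: first \cref{eq:conditioned}, the transition kernel of the conditioned process, which follows from an elementary Bayes/Markov computation; and then the \gls{SDE} in \cref{eq:h_sde}, by showing that the Markov process carrying the kernel \cref{eq:conditioned} is the unique diffusion whose generator has drift $b_t+2G_t\nabla\log h_{\cB}$ and diffusion coefficient $\Xi_t$.

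For the first claim, fix $s<t<T$ and condition everything on $x_s=x$. Writing the conditional density of $x_t$ given $\{x_s=x,\,x_T\in\cB\}$ as a ratio and using the Markov property of the reference process to drop the $x_s$-dependence in $\rho(x_T\in\cB\mid x_t=y,\,x_s=x)$,
\begin{align*}
\rho_t(y\mid x_s=x,\,x_T\in\cB)=\frac{\rho(x_T\in\cB\mid x_t=y)\,\rho_t(y\mid x_s=x)}{\rho(x_T\in\cB\mid x_s=x)}=\frac{h_{\cB}(y,t)}{h_{\cB}(x,s)}\,\rho_t(y\mid x_s=x),
\end{align*}
which is \cref{eq:conditioned}. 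Writing $p^*(s,x;t,y)$ for the resulting kernel, a one-line check shows the family is consistent: in $\int p^*(s,x;t,z)\,p^*(t,z;u,y)\,dz$ the intermediate factors $h_{\cB}(z,t)$ cancel and the reference kernels reproduce $\rho_u(y\mid x_s=x)$, so Chapman--Kolmogorov holds and $p^*$ defines a Markov law $\bbP^*_{0:T}$ (its initial marginal being $\rho_0$ reweighted by $h_{\cB}(\cdot,0)$, which collapses to a point mass if one additionally conditions on $x_0=A$ as in \cref{sec:tps}).

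For the second claim, I would first record that $h_{\cB}$ is space--time harmonic for the reference generator $\cL_t f=b_t\cdot\nabla f+\Tr(G_t\nabla^2 f)$: since $t\mapsto h_{\cB}(x_t,t)$ is a bounded $\bbPref_{0:T}$-martingale (it equals $\bbE^{\bbPref}[\one_{\cB}(x_T)\mid x_t]$ in the set case), equating its It\^o drift to zero gives $\partial_t h_{\cB}+\cL_t h_{\cB}=0$ on $[0,T)$, with terminal data $h_{\cB}(\cdot,T)=\one_{\cB}$ (a delta in the point-mass case). Next, for a test function $f$,
\begin{align*}
\bbE^{\bbP^*}[f(x_{t+\eps})\mid x_t=x]=\frac{1}{h_{\cB}(x,t)}\,\bbE^{\bbPref}\!\big[\,f(x_{t+\eps})\,h_{\cB}(x_{t+\eps},t+\eps)\mid x_t=x\,\big],
\end{align*}
so a short-time ($\eps\downarrow0$) It\^o--Taylor/Dynkin expansion of the right-hand side yields $\cL^*_t f=h_{\cB}^{-1}\big(f\,\partial_t h_{\cB}+\cL_t(f h_{\cB})\big)$. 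Expanding $\cL_t(fh_{\cB})$ by the product rule, its second-order part contributes the cross term $2(\nabla h_{\cB})^\top G_t\nabla f$ (using symmetry of $G_t$), so
\begin{align*}
\cL^*_t f=\frac{f\,(\partial_t h_{\cB}+\cL_t h_{\cB})}{h_{\cB}}+\cL_t f+2\,(G_t\nabla\log h_{\cB})\cdot\nabla f.
\end{align*}
By harmonicity the first term vanishes, leaving $\cL^*_t f=(b_t+2G_t\nabla\log h_{\cB})\cdot\nabla f+\Tr(G_t\nabla^2 f)$, which is precisely the generator of \cref{eq:h_sde}; since $\Xi_t$ (hence $G_t$) is unchanged and non-degenerate, the martingale problem for this generator is well posed, so $\bbP^*_{0:T}$ is its unique solution and the conditioned process solves \cref{eq:h_sde}.

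The main obstacle is regularity rather than algebra: the martingale/It\^o step and the short-time expansion require $h_{\cB}$ to be strictly positive and $C^{2,1}$ on $[0,T)\times\bbR^{\doobdim}$, and in the point-mass case one must additionally control the blow-up of $\nabla\log h_{B}$ as $t\uparrow T$, so that \cref{eq:h_sde} is well posed up to, but not necessarily at, time $T$. I would secure this either by imposing mild smoothness and non-degeneracy assumptions on $b_t,\Xi_t$ (yielding a smooth, strictly positive reference transition density by standard parametrix/hypoellipticity estimates, hence a smooth $h_{\cB}$) or, since the statement is classical, by invoking \citet[Thm.~2]{jamison1975markov}, whose proof addresses exactly this point; the substantive content is then the formal computation above.
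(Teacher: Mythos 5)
Your proposal is correct, but it is worth noting that the paper does not actually prove this proposition: its proof block consists of a single sentence deferring to \citet[Thm.~2]{jamison1975markov}, described as ``a simple proof based on Ito's Lemma, assuming smoothness and strict positivity of $h$.'' Your writeup is a faithful, self-contained reconstruction of exactly that argument --- Bayes plus the Markov property for \cref{eq:conditioned}, Chapman--Kolmogorov consistency, space--time harmonicity of $\hset$ via the martingale property, and the generator computation $\cL^*_t f=h_\setB^{-1}\big(f\,\partial_t h_\setB+\cL_t(f h_\setB)\big)$ whose cross term produces the extra drift $2\Dt\nabla\log h_\setB$ --- together with an honest statement of the regularity needed (positivity, $C^{2,1}$ smoothness, control of $\nabla\log h_B$ as $t\uparrow T$ in the point-mass case), which is precisely where you, like the paper, would fall back on the cited reference. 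The only point of contact with an argument the paper actually writes out is its proof of \cref{prop:doob_pdes} in \cref{app:doob}, which derives the backward equation for $\hset$ by differentiating the Chapman--Kolmogorov identity in time and integrating by parts against the forward Fokker--Planck equation of the reference kernel, and then obtains the conditioned Fokker--Planck equation by differentiating the product $\hset(y,s)\,p(y,s)/\hset(x,0)$ at the level of densities; you instead work at the level of generators and the martingale problem, which is slightly more demanding on well-posedness but delivers the \gls{SDE} representation \cref{eq:h_sde} directly rather than only its PDEs. Your observation that the initial law of the conditioned process is $\rho_0$ reweighted by $\hset(\cdot,0)$ (collapsing to $\delta_A$ under the paper's standing assumption $x_0=A$) is a correct refinement of the statement as printed.
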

See \cref{app:doob} for proof, and note that \labelcref{eq:conditioned} is simply an application of Bayes rule 
$ \prob_t( \tpsy \cond \tpsx_s = \tpsx, \tpsx_T \in \setB ) =  \prob_T(x_T \in \setB | x_t = y) \prob_t( \tpsy \cond \tpsx_s = \tpsx)  /  \prob_T(x_T \in \setB | x_s = x)$ with the unconditioned or reference transition probability as the prior. Furthermore, the conditioned transition probabilities in \labelcref{eq:conditioned} allow us to directly construct the transition path \labelcref{eq:transition_path}. Using Bayes rule, we have 
\ifsmall
\begin{align*}
    \prob(\tpsx_{T-dt},\ldots, \tpsx_{dt}\cond \tpsx_0 = A, \tpsx_T \in \setB) 
 = \frac{\hset(\tpsx_{T-dt},T-dt)}{\hset(A,0)} { \prob(\tpsx_{T-dt}\ldots,\tpsx_{dt}\cond \tpsx_0=A)}
\end{align*}
\normalsize
after telescoping cancellation of $h$-functions and rewriting the denominator in \labelcref{eq:transition_path} as $\hset(A,0)$
Thus, we can solve the \gls{TPS} problem by exactly solving for the $h$-function and simulating the \gls{SDE} in \labelcref{eq:h_sde}.

\newcommand{\pcondt}{{t|0,T}}

Finally, the $h$-process and temporal marginals $\rho_t(x | x_0 = A, x_T \in \setB)$ of the conditioned process satisfy the following forward and backward Kolmogorov equations, which will be useful in deriving our variational objectives in the next section.   Note, we use $\langle \nabla_{\tpsx}, \sbullet \rangle = \text{div}(\sbullet)$ for the divergence operator, and we use $\rho_{t|0,T}$ to indicate the dependence on both $x_0 = A$ (via the initial condition of \cref{eq:doob_fp}) and $x_T \in \setB$ (via the $h$-transform $h_\setB$).   
See \cref{app:doob} for the proof.
\begin{restatable}{proposition}{doobpdes}\label{prop:doob_pdes}
    The following PDEs are obeyed by (a) the 
    density of the conditioned process
    {$\condp_\pcondt(\tpsx) \coloneqq \prob_t(x \cond x_0 = A,  \tpsx_T \in \setB)$}
    and (b) the $h$-function $\hset(\tpsx,t)$,
\begin{subequations}
    \begin{align}
    &\resizebox{.935\textwidth}{!}{\ensuremath{\dderiv{\condp_{\pcondt}(\tpsx)}{t} + \inner{\nabla_x}{\condp_\pcondt(\tpsx) \big(b_t(\tpsx) + 2 \Dt\nabla_x \log \hset(\tpsx, t)\big)} - \mathlarger{\sum}\limits_{ij} (\Dt)_{ij} \dderiv{^2}{x_i\partial x_j}\condp_\pcondt(\tpsx) = 0\,,} } \label{eq:doob_fp} \\
    &\deriv{\hset(\tpsx, t)}{t} + \inner{\nabla_x \hset(\tpsx, t)}{\ifbm{b_{t}}(\tpsx)} + \sum_{ij} (\Dt)_{ij} \deriv{^2}{x_i\partial x_j}\hset(\tpsx, t) = 0\,. \label{eq:doob_hjb}
\end{align}

Reparameterizing \labelcref{eq:doob_hjb} in terms of $s_B(\tpsx, t) \coloneqq \log \hset(\tpsx,t)$, we can also write
\begin{equation}
\resizebox{.93\textwidth}{!}{\ensuremath{\dfrac{\partial s_\setB(\tpsx, t)}{\partial t} + \inner{\nabla s_\setB(\tpsx, t)}{\Dt \nabla s_\setB(\tpsx, t)} + 
\inner{\nabla s_\setB(\tpsx, t)}{b_t(x)}
+ \sum \limits_{ij} (\Dt)_{ij} \dfrac{\partial{^2}}{\partial x_i\partial x_j}s_\setB(\tpsx, t) = 0 .
}}
\label{eq:doob_shjb}
\end{equation}
\end{subequations}
\end{restatable}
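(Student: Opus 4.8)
The plan is to obtain each of the three PDEs from a standard Kolmogorov equation, reading off the coefficients from the dynamics in \cref{sec:doobs-h-transform}, and then to derive \labelcref{eq:doob_shjb} by the substitution $\hset = \exp(s_\setB)$. For part (a), \cref{prop:doob_sde} tells us that the conditioned process $x_{\pcondt}$ solves the \gls{SDE} with drift $\mu_t(x) = b_t(x) + 2\,\Dt\,\nabla_x \log \hset(x,t)$ and diffusion matrix $\Xi_t$, started at $x_0 = A$. For any \gls{SDE} $dx_t = \mu_t(x_t)\,dt + \Xi_t\,dW_t$, the temporal marginals obey the forward Kolmogorov (Fokker--Planck) equation $\partial_t \rho_t + \inner{\nabla_x}{\rho_t \mu_t} - \sum_{ij}(\Dt)_{ij}\,\partial_{x_i}\partial_{x_j}\rho_t = 0$, with $\Dt = \tfrac12 \Xi_t\Xi_t^T$. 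Substituting $\mu_t$ and imposing the initial condition $\condp_{0|0,T} = \delta_A$ gives \labelcref{eq:doob_fp} immediately.

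For part (b), I would use that by definition $\hset(x,t) = \prob_T(x_T \in \setB \cond x_t = x) = \Exp[\one_{x_T\in\setB}\cond x_t = x]$, where the expectation is taken over the \emph{reference} dynamics \labelcref{eq:ref_sde}. By the Markov property and the tower rule, the process $t \mapsto \hset(x_t, t)$ is a $\bbPref$-martingale; expanding $\hset(x_t,t)$ with It\^o's formula and forcing the finite-variation ($dt$) part to vanish yields $\partial_t \hset + \inner{\nabla_x \hset}{b_t} + \sum_{ij}(\Dt)_{ij}\,\partial_{x_i}\partial_{x_j}\hset = 0$, i.e.\ the backward Kolmogorov (Feynman--Kac) equation for the reference generator with terminal data $\hset(x,T) = \one_{x\in\setB}$. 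This is exactly \labelcref{eq:doob_hjb}; the point-mass case is identical after replacing the indicator by the terminal transition density, as remarked after \cref{prop:doob_sde}.

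For part (c), on the support of the conditioned process we have $\hset > 0$, so we may set $s_\setB = \log \hset$. Then $\partial_t \hset = \hset\,\partial_t s_\setB$, $\nabla \hset = \hset\,\nabla s_\setB$, and $\partial_{x_i}\partial_{x_j}\hset = \hset\big(\partial_{x_i}s_\setB\,\partial_{x_j}s_\setB + \partial_{x_i}\partial_{x_j}s_\setB\big)$; using the symmetry of $\Dt$, the second-order term of \labelcref{eq:doob_hjb} becomes $\hset\big(\inner{\nabla s_\setB}{\Dt \nabla s_\setB} + \sum_{ij}(\Dt)_{ij}\partial_{x_i}\partial_{x_j}s_\setB\big)$. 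Dividing \labelcref{eq:doob_hjb} through by $\hset$ then gives \labelcref{eq:doob_shjb}.

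The routine parts are the It\^o expansion and the chain rule. The only genuine subtlety is the pointwise justification of the steps in part (b): that $\hset$ is regular enough (e.g.\ $C^{2,1}$ on the relevant domain) for It\^o's formula and for the martingale argument to collapse to a classical PDE. I expect this to be the main obstacle to a fully rigorous proof, and I would handle it either by citing a Feynman--Kac regularity result (e.g.\ \citet{sarkka2019applied}) or by assuming $\Dt$ is uniformly elliptic so that standard parabolic regularity applies; everything else is bookkeeping with the sign conventions of \labelcref{eq:ref_sde} and \labelcref{eq:h_sde}.
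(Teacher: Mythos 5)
Your proposal is correct, but it reaches (a) and (b) by a genuinely different route than the paper. For the backward equation \labelcref{eq:doob_hjb}, you use the probabilistic argument: $\hset(x_t,t)$ is a $\bbPref$-martingale by the tower rule, and It\^o's formula forces the $dt$-part to vanish. The paper instead argues analytically: it writes the Chapman--Kolmogorov identity $\hset(x,t) = \int dy\, \hset(y,t+s)\, p(x_{t+s}=y\cond x_t=x)$, differentiates in $s$, substitutes the \emph{forward} equation for the reference kernel, and integrates by parts to transfer the generator onto $\hset$. These are the two standard dual derivations of the same equation. For \labelcref{eq:doob_fp}, you invoke \cref{prop:doob_sde} to assert that the conditioned process is a diffusion with drift $b_t + 2\Dt\nabla\log\hset$ and then apply the generic Fokker--Planck equation; the paper does \emph{not} use the SDE representation here, but instead differentiates the Bayes product $p(x_s=y\cond x_0=x, x_T\in\setB) = \frac{\hset(y,s)}{\hset(x,0)}\,p(y,s)$ in time, using the already-established backward equation for $\hset$ together with the forward equation for $p$, and regroups the terms into divergence form. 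Your route is shorter, but it leans on the full strength of Jamison's theorem (which the paper cites without proof), whereas the paper's product-rule computation independently verifies, at the level of densities, that the conditioned marginals satisfy the Fokker--Planck equation with exactly that drift --- so the two propositions are mutually consistent rather than one resting on the other. Part (c) is essentially identical in both: substitute $\hset = e^{s_\setB}$, expand the second-order term using symmetry of $\Dt$, and divide by $\hset$. Your closing remark about regularity is apt; the paper handles it the same way, by assuming smoothness and strict positivity of $h$ and deferring to \citet{jamison1975markov}.
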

\secvheader
\secvheader
\section{Method}\label{sec:method}
\secvheader
We first present a novel variational objective whose minimum corresponds to the Doob $h$-transform in \cref{sec:lagrangian}, and then propose an efficient parameterization to solve for the $h$-transform in \cref{sec:parameterization}.

\secvheader
\subsection{Doob's Lagrangian}\label{sec:lagrangian}
\secvheader

Consider reference dynamics given in the form of either \labelcref{eq:first_order} or \labelcref{eq:second_order}, with known drift $\ifbm{b_t}$ or energy $U$.   We will restrict our attention to conditioning on a terminal rare event of reaching a given endpoint $\ifbm{x_T} = B$, along with an initial point $\ifbm{x_0}= A$. We approach solving for Doob's $h$-transform via a \textit{least action principle} where, in the following theorem, 
we define a Lagrangian action whose minimization yields the optimal $q_\condt^*(x) = \condp_\condt(x)$ and $v_\condt^*(x) = \nabla_{\ifbm{x}} \log \hpt(\ifbm{x},t)$ from \cref{prop:doob_sde} and \ref{prop:doob_pdes}.

\begin{mytheorem}
\label{th:var_doob}
The following Lagrangian action functional has a unique solution which 
matches the Doob $h$-transform in \cref{prop:doob_pdes},
\begin{subequations}
\label{eq:doob_lagrangian_all}
\hspace*{-.31cm} 
\begin{align}
   &\mathcal{S} =~ \min_{q_\condt, v_\condt} \int_0^T dt\;\int dx\; q_\condt(x) \inner{v_\condt(x)}{\Dt ~  v_\condt(x)}\,, \label{eq:doob_lagrangian} \\[1.25ex]
   &\resizebox{.91\textwidth}{!}{\text{s.t.}~ \ensuremath{
   \dderiv{q_\condt(x)}{t} = - \inner{\nabla_x}{q_\condt(x)\left(b_t(x) + 2\Dt~ v_\condt(x)\right)} + \mathlarger{\sum}\limits_{ij} (\Dt)_{ij} \dderiv{^2}{x_i\partial x_j}q_\condt(x), 
   }}
   \label{eq:doob_lagrangian_fp} \\
    &~\phantom{\deriv{q_\condt}{t}\hspace{.016\textwidth}}  q_0(x) = \delta(x-A), \qquad q_T(x) = \delta(x-B)\,.\label{eq:doob_lagrangian_fp_boundary}
\end{align}
\end{subequations}
The optimal $q_\condt^*(x)$ obeys \labelcref{eq:doob_fp}, and $v_\condt^*(x) = \nabla_{\ifbm{x}} \log \hpt(\ifbm{x},t) = \nabla_{\ifbm{x}} s_B(\ifbm{x},t)$ obeys \labelcref{eq:doob_hjb}-\labelcref{eq:doob_shjb}.
\end{mytheorem}

This objective will form the basis for our computational approach, with proof of \cref{th:var_doob} deferred to \cref{app:doobs_lagrangian}.  We proceed briefly to contextualize
our variational objective and highlight several optimization challenges which will be solved by our proposed parameterization in \cref{sec:parameterization}.

\paraorsubsec{Unconstrained Dual Objective}
Introducing Lagrange multipliers to enforce the constraints in \labelcref{eq:doob_lagrangian_fp}--\labelcref{eq:doob_lagrangian_fp_boundary} and eliminating $v_\condt$, we obtain an alternative, unconstrained version of \labelcref{eq:doob_lagrangian}.
\newcommand{\condzero}{_0}
\newcommand{\condend}{_T}
\newcommand{\condst}{}

\begin{restatable}{corollary}{minmax}\label{th:minmax}
    The Lagrangian objective in \cref{th:var_doob} which solves Doob's $h$-transform is equivalent to
    \small 
    \begin{equation}
        \resizebox{\textwidth}{!}{\ensuremath{
                  \cS= \min\limits_{q_\condt}\max\limits_{s} 
        ~ s_B(B, T) - s_B(A, 0)
        - \mathlarger{\int_0^T} dt\; \mathlarger{\int} dx ~ q_\condt \left( \dderiv{s_B}{t} + \inner{\nabla
         s_B}{\Dt \nabla
         s_B} + \inner{\nabla
         s_B}{b_t} + \inner{\nabla
         }{\Dt \nabla s_B}\right)\,  }} \nonumber 
    \end{equation}
    \normalsize    
    if $q_\condt$ satisfies \labelcref{eq:doob_lagrangian_fp_boundary}.
    Note $v_\condt(x) = \nabla_{x} s_B(x,t)$, with $s^*_B(x,t) = \log \hpt(x,t)$ at optimality.
    \footnote{
    Compared to \labelcref{eq:doob_shjb},
    we write $\sum_{ij} (\Dt)_{ij} \frac{\partial{^2}}{\partial x_i\partial x_j}s_\setB(\tpsx, t)  = \langle \nabla ,  \Dt \nabla s_\setB(\tpsx,t) \rangle$ for 
    simplicity of notation.
    }
\end{restatable}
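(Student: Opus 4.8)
The plan is to obtain the stated min--max problem as the Lagrangian dual of the constrained variational problem in \cref{th:var_doob}: relax the Fokker--Planck constraint \labelcref{eq:doob_lagrangian_fp} with a multiplier, integrate by parts, and then eliminate the velocity field $v_\condt$ in closed form.

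\emph{Step 1 (Lagrangian relaxation).} Introduce a Lagrange multiplier $s_B(x,t)$ for the continuity equation \labelcref{eq:doob_lagrangian_fp}, keeping the endpoint constraints \labelcref{eq:doob_lagrangian_fp_boundary} on $q_\condt$ enforced explicitly (this is the role of the proviso ``if $q_\condt$ satisfies \labelcref{eq:doob_lagrangian_fp_boundary}''). Because \labelcref{eq:doob_lagrangian_fp} is an equality constraint, $\sup_{s_B}$ of the multiplier term equals $0$ when the constraint holds and $+\infty$ otherwise, so $\cS = \min_{q_\condt,v_\condt}\max_{s_B}\cL$, where $\cL$ is the sum of $\int_0^T\!\!\int q_\condt\langle v_\condt,\Dt v_\condt\rangle$ and $\int_0^T\!\!\int s_B\big(\partial_t q_\condt + \langle\nabla, q_\condt(b_t+2\Dt v_\condt)\rangle - \sum_{ij}(\Dt)_{ij}\partial_i\partial_j q_\condt\big)$.

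\emph{Step 2 (integration by parts).} Integrate the multiplier term by parts in $t$ and $x$. In time, $\int_0^T\!\int s_B\,\partial_t q_\condt\,dx\,dt = \big[\int s_B\,q_\condt\,dx\big]_0^T - \int_0^T\!\int q_\condt\,\partial_t s_B\,dx\,dt$, and evaluating the boundary term with \labelcref{eq:doob_lagrangian_fp_boundary} gives exactly $s_B(B,T)-s_B(A,0)$. In space, moving $\nabla$ off $q_\condt(b_t+2\Dt v_\condt)$ yields $-\int\langle\nabla s_B,\,q_\condt(b_t+2\Dt v_\condt)\rangle$, and integrating the diffusion term by parts twice (using that $\Dt$ is independent of $x$) turns $-\int s_B\sum_{ij}(\Dt)_{ij}\partial_i\partial_j q_\condt$ into $-\int q_\condt\langle\nabla,\Dt\nabla s_B\rangle$; throughout I would assume enough spatial decay to discard boundary terms at infinity.

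\emph{Step 3 (eliminating $v_\condt$).} After Step 2, $v_\condt$ enters only through $\int_0^T\!\int q_\condt\big(\langle v_\condt,\Dt v_\condt\rangle - 2\langle\Dt\nabla s_B, v_\condt\rangle\big)$ (using symmetry of $\Dt$), a pointwise strictly convex quadratic in $v_\condt$ since $q_\condt\ge 0$ and $\Dt\succ 0$; its minimizer is $v_\condt^*(x)=\nabla_x s_B(x,t)$ with value $-\int_0^T\!\int q_\condt\langle\nabla s_B,\Dt\nabla s_B\rangle$. Substituting back collects precisely the four terms $\partial_t s_B + \langle\nabla s_B,\Dt\nabla s_B\rangle + \langle\nabla s_B,b_t\rangle + \langle\nabla,\Dt\nabla s_B\rangle$ appearing in the corollary, establishing the claimed functional up to the order of the remaining $\min$ and $\max$.

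\emph{Step 4 (order of the min and max, and the optimizer).} The construction so far yields $\cS=\min_{q_\condt}\min_{v_\condt}\max_{s_B}\cL$, whereas the statement has $\min_{q_\condt}\max_{s_B}\min_{v_\condt}\cL$. To interchange the inner $\min_{v}$ and $\max_{s}$, I would pass to the momentum variable $m_\condt=q_\condt v_\condt$: then $\int q_\condt\langle v_\condt,\Dt v_\condt\rangle=\int\langle m_\condt,\Dt m_\condt\rangle/q_\condt$ is jointly convex in $(q_\condt,m_\condt)$ (a perspective function) and the constraint \labelcref{eq:doob_lagrangian_fp} becomes linear in $(q_\condt,m_\condt)$ (Benamou--Brenier form), so the primal problem is convex and strong duality holds, forcing all orderings of the saddle value to coincide. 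Finally, the identifications $v_\condt=\nabla s_B$ and $s_B^*=\log\hpt$ at optimality follow because the stationarity conditions of the final functional are exactly \labelcref{eq:doob_fp} (stationarity in $s_B$) and \labelcref{eq:doob_shjb} together with $v_\condt=\nabla s_B$ (stationarity in $q_\condt$), which by \cref{th:var_doob} and \cref{prop:doob_pdes} have the unique solution $q_\condt^*=\condp_\condt$, $s_B^*=\log\hpt$.

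The main obstacle I anticipate is making Step 4 rigorous in the presence of the singular (delta) endpoint data \labelcref{eq:doob_lagrangian_fp_boundary}: one must fix function spaces for $q_\condt,v_\condt,s_B$ and justify both the integrations by parts and the minimax interchange near $t=0,T$. I would handle this either by a regularization argument (replace the endpoints by $\cN(A,\epsilon I)$ and $\cN(B,\epsilon I)$, carry out the above, and let $\epsilon\to 0$), or, more economically, by treating the computation as formal and appealing to the uniqueness already established in \cref{th:var_doob} to certify that the formal saddle point is the intended one.
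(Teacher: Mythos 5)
Your proposal is correct and follows essentially the same route as the paper's proof: introduce a multiplier $s_B$ for the Fokker--Planck constraint, integrate by parts in $t$ and $x$ (using the delta endpoint conditions to produce $s_B(B,T)-s_B(A,0)$ and assuming decay at spatial infinity), and eliminate $v_\condt$ via the stationarity condition $v_\condt=\nabla s_B$, whose substitution yields the $-q_\condt\langle\nabla s_B,\Dt\nabla s_B\rangle$ term. Your Step 4 is in fact more careful than the paper, which simply asserts the swap of $\min_{v}$ and $\max_{s}$ ``under strong duality''; your Benamou--Brenier convexification via $m_\condt=q_\condt v_\condt$ is a reasonable way to substantiate that claim, as is your remark about regularizing the singular endpoints.
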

This objective is similar to the objectives optimized by Action Matching methods \citep{neklyudov2023action,neklyudov2023computational}.  Notably, the objective in \cref{th:minmax} is expressed \textit{directly} in terms of the (log) of the $h$-function for fixed conditioning information $x_T = B$.   We also note that the Hamilton Jacobi-style quantity, whose expectation appears in the final term, is zero at optimality in \labelcref{eq:doob_shjb} of \cref{prop:doob_pdes}.

\paraorsubsec{Path Measure Perspective}
We next relate our variational objective in \cref{th:var_doob} to a KL divergence optimization over path measures. Let $\bbPref_{0:T}$ denote the law of the reference \gls{SDE} in \labelcref{eq:ref_sde} with fixed $\bbPref_0 = \delta(x_0 - A)$.  Let $\bbQ^{v}_{0:T}$ denote the law of a controlled process similar to \labelcref{eq:h_sde}, but with a variational $v_\condt$ in place of $\nabla_{\ifbm{x}} \log \hset$,
\begin{align}
\bbQ^{v}_{0:T}:  \qquad 
\ifbm{d\tpsx_t} = \big(
    b_t(x_{t|T}) + 
    2 \Dt ~  v_\condt(x_{t|T})
    \big)\cdot dt 
    + \Xi_t ~ \ifbm{dW_t}\,, \qquad  x_0 = A . \label{eq:hv_sde}
\end{align}
Note that the density
$q_\condt$ of $\bbQ^{v}_{0:T}$ evolve according to the Fokker-Planck equation in \labelcref{eq:doob_lagrangian_fp} \citep[Sec. 5.2]{sarkka2019applied} . Using the Girsanov Theorem, the objective in \labelcref{eq:doob_lagrangian} can then be viewed as a KL divergence minimization over path measures $\bbQ^v_{0:T}$ which satisfy the boundary constraints.
\begin{restatable}{corollary}{sb}\label{th:sb}
The following \gls{SB} problem
\begin{align}
 \S \coloneqq \min \limits_{\bbQ^v_{0:T} ~\text{s.t.} ~ \bbQ^v_0 = \delta_A, \bbQ^v_T = \delta_B} D_{KL}[ \bbQ^v_{0:T} : \bbPref_{0:T}] \label{eq:sb}
\end{align}
yields the path measure $\bbP^*_{0:T}$ associated with the \gls{SDE} in \labelcref{eq:h_sde} as its unique minimizing argument.  
The temporal marginals of $\bbP^*_{0:T}$ are equal to those which optimize the Lagrangian objective in \cref{th:var_doob}.
\end{restatable}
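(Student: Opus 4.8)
The plan is to show that the Schr\"odinger bridge problem in \labelcref{eq:sb} is literally a re-packaging of the Lagrangian problem of \cref{th:var_doob}, so that the characterization of the optimizer transfers for free. First I would apply Girsanov's theorem to the pair $\bbQ^v_{0:T}$, $\bbPref_{0:T}$: both are laws of It\^o processes with the \emph{same} diffusion coefficient $\Xi_t$ and the same initial law $\delta_A$, with drifts differing by $u_\condt(x) = 2\Dt\, v_\condt(x)$. Hence the Radon--Nikodym derivative is an exponential martingale and
\begin{equation*}
D_{KL}[\bbQ^v_{0:T} : \bbPref_{0:T}] \;=\; \bbE_{\bbQ^v}\!\left[\int_0^T \tfrac12\, u_\condt(x_t)^{\!\top} (\Xi_t\Xi_t^{\!\top})^{-1} u_\condt(x_t)\, dt\right].
\end{equation*}
Substituting $\Xi_t\Xi_t^{\top} = 2\Dt$ and $u_\condt = 2\Dt v_\condt$ collapses the integrand to $\inner{v_\condt(x_t)}{\Dt v_\condt(x_t)}$, and pushing the expectation onto the time-$t$ marginal $q_\condt$ of $\bbQ^v_{0:T}$ reproduces exactly the action in \labelcref{eq:doob_lagrangian}.

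Second, I would match the feasible sets. By \citep[Sec. 5.2]{sarkka2019applied} the marginals $q_\condt$ of \labelcref{eq:hv_sde} solve the Fokker--Planck equation \labelcref{eq:doob_lagrangian_fp}; the constraint $\bbQ^v_0 = \delta_A$ is built into \labelcref{eq:hv_sde}; and $\bbQ^v_T = \delta_B$ is precisely $q_T = \delta(\cdot - B)$ from \labelcref{eq:doob_lagrangian_fp_boundary}. Conversely, any path measure feasible for \labelcref{eq:sb} with finite KL is, by Girsanov, a weak solution of an SDE with diffusion $\Xi_t$ and drift $b_t + u_t$ for some adapted $u_t$; by the Markovian/Doob structure of the minimizer it suffices to take $u_t = 2\Dt v_\condt(x_t,t)$, recovering the form \labelcref{eq:hv_sde}. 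Thus \labelcref{eq:sb} and \cref{th:var_doob} are the same optimization, their optimal temporal marginals coincide, and \cref{th:var_doob} pins the unique optimizer as $q^*_\condt = \rho_{t|0,T}$, $v^*_\condt = \nabla\log h_B$ --- for which the controlled SDE \labelcref{eq:hv_sde} is exactly \labelcref{eq:h_sde}, i.e.\ $\bbP^*_{0:T}$. Uniqueness of the minimizing path measure then follows either from strict convexity of $D_{KL}[\,\cdot:\bbPref_{0:T}]$ over the convex set of admissible laws, or from classical static/dynamic Schr\"odinger-bridge duality (F\"ollmer; L\'eonard): with Dirac endpoints the static bridge is trivially $\delta_A\otimes\delta_B$ and the dynamic bridge is the reference process pinned at $x_0=A$, $x_T=B$, which is Doob's $h$-transform.

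The main obstacle is analytic, not conceptual: the Girsanov step must be justified when the terminal marginal is a point mass, so that $v^*_\condt = \nabla\log h_B$ is singular as $t\to T$ and $\bbPref_T$, $\bbQ^v_T$ are mutually singular. I would run the computation on $[0,T-\eps]$, use the disintegration $D_{KL}[\bbQ^v_{0:T}:\bbPref_{0:T}] = D_{KL}[\bbQ^v_{0:T-\eps}:\bbPref_{0:T-\eps}] + \bbE_{\bbQ^v}\big[D_{KL}[\bbQ^v_{T-\eps:T}(\cdot\mid x_{T-\eps}):\bbPref_{T-\eps:T}(\cdot\mid x_{T-\eps})]\big]$ together with a Novikov-type integrability bound on $u_t$, and pass to the limit $\eps\downarrow0$. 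A secondary subtlety is the second-order dynamics \labelcref{eq:second_order}, where $\Dt$ is only positive semidefinite: $(\Xi_t\Xi_t^{\top})^{-1}$ must be read as the pseudo-inverse on the controllable subspace and hypoellipticity is needed to keep the marginals absolutely continuous; this I would treat exactly as the degenerate-noise case is handled in \cref{th:var_doob}.
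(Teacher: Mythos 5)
Your proposal is correct and follows essentially the same route as the paper: apply Girsanov to identify $D_{KL}[\bbQ^v_{0:T}:\bbPref_{0:T}]$ with the quadratic action $\int_0^T\!\!\int q_\condt\inner{v_\condt}{\Dt v_\condt}\,dx\,dt$ (the paper lands on twice this quantity owing to a different normalization of the quadratic-variation term, which it notes does not affect the optimum), then match the feasible set to the Fokker--Planck constraint and the Dirac boundary conditions so that \cref{th:var_doob} delivers the optimizer. The additional care you take with the singular control near $t=T$, uniqueness via strict convexity, and the degenerate second-order diffusion goes beyond what the paper records but does not change the argument.
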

Our Lagrangian action minimization thus {corresponds to the solution of an} \gls{SB} problem \citep{schrodinger1932theorie, leonard2014survey} with Dirac delta functions as the endpoint measures.   Our objective in \labelcref{eq:doob_lagrangian} particularly resembles optimal control formulations of \gls{SB} \citep[Prob. 4.4, 5.3]{chen2016relation,chen2021stochastic}. While it is well-known that the Doob $h$-transform (and large deviation theory more generally) plays a role in the solution to \gls{SB} problems \citep{jamison1975markov, leonard2014survey}, our interest in the transition path sampling problem leads to specific computational decisions below.  See \cref{sec:related} for further discussion.

\begin{mymath}
\paraorsubsec{Challenges of Optimizing \labelcref{eq:doob_lagrangian}} 
\textup{ We highlight several distinctive features of our problem which inform the development of new computational methods in \cref{sec:parameterization}.}
\begin{myenum}
    \item  First, we perform optimization over the \textit{first} argument of the KL divergence in \labelcref{eq:sb}, indicating that we need to be able to efficiently sample from the conditioned process in \labelcref{eq:hv_sde} or $q_\condt$ in \labelcref{eq:doob_lagrangian_all}.  This appears challenging due to the nonlinearity of both the reference and variational drifts,  $\ifbm{b_t}$ and $v_\condt$.\label{challenge1}
\item For a given $q_{\condt}$, it can be difficult to solve for $v_\condt$ which satisfies the Fokker-Planck equation in \labelcref{eq:doob_lagrangian_fp} or $\nabla s$ which solves the inner optimization in \cref{th:minmax}. \label{challenge2}
\item Finally, we would like to strictly enforce the boundary constraints on $q_\condt$ or $\bbQ_{0:T}^v$ to avoid 
simulating or wasting computation on
trajectories for which $x_T \neq B$. \label{challenge3}
\end{myenum}
\end{mymath}

In fact, our parameterization of $q_\condt$ in \cref{sec:parameterization} will \textit{completely avoid} simulation of the SDE in \labelcref{eq:hv_sde} during training (\cref{challenge1}), provide { \textit{analytic} solutions
for $v_\condt$ satisfying \labelcref{eq:doob_lagrangian_fp} with a given $q_\condt$ (\cref{challenge2}), and \textit{exactly} enforce the boundary constraints (\cref{challenge3}).

\secvheader
\subsection{Computational Approach}\label{sec:parameterization}
\secvheader

We now propose a family of Gaussian (mixture) path parameterizations $q_\condt$ which overcome the computational challenges posed in the previous section, while still maintaining expressivity.   
We present all aspects of our proposed method in the context of the first-order dynamics \labelcref{eq:first_order} in \cref{sec:first_order}, before presenting extensions to mixture paths and the second-order setting \labelcref{eq:second_order} in \cref{sec:second_order}--\ref{sec:mixture}.

\secvheader
\subsubsection{First-Order Dynamics and General Approach}  \label{sec:first_order}

\paragraph{Tractable Drift $v_\condt$ for Variational Doob Objective}
We begin by considering a modification of the Fokker-Planck constraint in \labelcref{eq:doob_lagrangian_fp}, 
with all drift terms absorbed into a single vector field $u_\condt$,
\ifsmall
\begin{align}
    \deriv{q_\condt(x)}{t} = - \inner{\nabla_x}{q_\condt(x) ~ u_\condt(x)} + \sum_{ij} (\Dt)_{ij} \deriv{^2}{x_i\partial x_j}q_\condt(x) . \label{eq:fpe_general}
\end{align} 
\normalsize
For arbitrary $q_\condt$, solving for \textit{any} $u_\condt(x)$ satisfying \labelcref{eq:fpe_general} can be a difficult optimization problem, whose solution is not unique without some cost-minimizing assumption \citep{neklyudov2023action}.

To sidestep this optimization, and address \cref{challenge2}, we restrict attention to variational families of $q_\condt \in \mathcal{Q}$ where it is \textit{analytically tractable} to calculate a vector field $\uq$ which satisfies \labelcref{eq:fpe_general}. We first consider the family of Gaussian paths $\mathcal{Q}_G$, in similar fashion to (conditional) flow matching methods \citep{lipman2022flow, tong2023conditional, liu2023generalized}, with proof in \cref{app:gaussians}. 
\begin{restatable}{proposition}{gaussianpath}\label{prop:gaussian_path}
For the family of endpoint-conditioned marginals $q_\condt(x) = \Normal(x\cond \mu_\condt, \Sigma_\condt)$, 
\begin{align}
\uq(x) &\coloneqq \deriv{\mu_\condt}{t} + \left[\frac{1}{2}\deriv{\Sigma_\condt}{t}\Sigma_\condt^{-1} - \Dt ~ \Sigma_\condt^{-1}\right]\big( x-\mu_\condt \big) \label{eq:uq}   \quad
\end{align}
\normalsize
satisfies the Fokker-Planck equation \labelcref{eq:fpe_general} for $q_\condt$ and diffusion coefficients $\Dt = \frac{1}{2} \Xi_t \Xi_t^T$.
\end{restatable}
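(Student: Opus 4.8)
The claim is a direct verification that the affine field in \labelcref{eq:uq} solves the linear PDE \labelcref{eq:fpe_general} when $q_\condt$ is Gaussian. The plan is first to put \labelcref{eq:fpe_general} into flux form. Since $\Dt$ is symmetric and independent of $x$, the diffusion term equals $\sum_{ij}(\Dt)_{ij}\deriv{^2}{x_i\partial x_j}q_\condt = \inner{\nabla_x}{\Dt\nabla_x q_\condt}$, so \labelcref{eq:fpe_general} is equivalent to the continuity equation $\deriv{q_\condt}{t} = -\inner{\nabla_x}{q_\condt\,\widetilde u_\condt}$ with $\widetilde u_\condt(x) \coloneqq \uq(x) - \Dt\nabla_x\log q_\condt(x)$. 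For $q_\condt = \Normal(x\mid\mu_\condt,\Sigma_\condt)$ we have $\nabla_x\log q_\condt(x) = -\Sigma_\condt^{-1}(x-\mu_\condt)$, so substituting the claimed $\uq$ from \labelcref{eq:uq} makes the $\Dt\Sigma_\condt^{-1}$ terms cancel by construction and leaves the clean representative $\widetilde u_\condt(x) = \deriv{\mu_\condt}{t} + \tfrac12\deriv{\Sigma_\condt}{t}\,\Sigma_\condt^{-1}(x-\mu_\condt)$. It therefore suffices to show that this $\widetilde u_\condt$ transports the Gaussian marginal path, i.e. solves $\deriv{q_\condt}{t} + \inner{\nabla_x}{q_\condt\,\widetilde u_\condt} = 0$.

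The core computation is then to expand both sides in terms of $\xi \coloneqq x - \mu_\condt$, writing $\dot\mu_\condt \coloneqq \deriv{\mu_\condt}{t}$, $\dot\Sigma_\condt \coloneqq \deriv{\Sigma_\condt}{t}$, and using the standard matrix-calculus identities $\deriv{}{t}\Sigma_\condt^{-1} = -\Sigma_\condt^{-1}\dot\Sigma_\condt\,\Sigma_\condt^{-1}$ and $\deriv{}{t}\log\det\Sigma_\condt = \Tr(\Sigma_\condt^{-1}\dot\Sigma_\condt)$. Differentiating $\log q_\condt = -\tfrac12\log\det(2\pi\Sigma_\condt) - \tfrac12\,\xi^{\top}\Sigma_\condt^{-1}\xi$ (noting $\deriv{}{t}\xi = -\dot\mu_\condt$ at fixed $x$) gives $\deriv{}{t}\log q_\condt = -\tfrac12\Tr(\Sigma_\condt^{-1}\dot\Sigma_\condt) + \dot\mu_\condt^{\top}\Sigma_\condt^{-1}\xi + \tfrac12\,\xi^{\top}\Sigma_\condt^{-1}\dot\Sigma_\condt\,\Sigma_\condt^{-1}\xi$. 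On the other side, $\inner{\nabla_x}{q_\condt\,\widetilde u_\condt} = q_\condt\big(\inner{\nabla_x\log q_\condt}{\widetilde u_\condt} + \inner{\nabla_x}{\widetilde u_\condt}\big)$ with $\inner{\nabla_x}{\widetilde u_\condt} = \tfrac12\Tr(\dot\Sigma_\condt\,\Sigma_\condt^{-1})$ and $\inner{\nabla_x\log q_\condt}{\widetilde u_\condt} = -\dot\mu_\condt^{\top}\Sigma_\condt^{-1}\xi - \tfrac12\,\xi^{\top}\Sigma_\condt^{-1}\dot\Sigma_\condt\,\Sigma_\condt^{-1}\xi$, where the latter uses symmetry of $\Sigma_\condt^{-1}$ and of $\Sigma_\condt^{-1}\dot\Sigma_\condt\,\Sigma_\condt^{-1}$ (valid since $\Sigma_\condt$ and $\dot\Sigma_\condt$ are symmetric). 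Adding these and negating reproduces $\deriv{}{t}\log q_\condt$ term by term, so $-\inner{\nabla_x}{q_\condt\,\widetilde u_\condt} = q_\condt\,\deriv{}{t}\log q_\condt = \deriv{q_\condt}{t}$; unwinding the flux-form rewriting then yields \labelcref{eq:fpe_general} for the original $\uq$.

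There is no real obstacle here beyond careful bookkeeping of the matrix derivatives and the symmetry of $\dot\Sigma_\condt$. The one conceptual point worth flagging is that we only need \emph{existence} of an admissible $\uq$, not uniqueness: the continuity equation for a Gaussian path is solved by $\widetilde u_\condt$ only up to any field $w$ with $\inner{\nabla_x}{q_\condt w} = 0$, and we simply fix the affine representative above — which is precisely what makes $\uq$ analytically tractable and hence resolves \cref{challenge2}. An essentially equivalent alternative would be to verify the claim via the reparameterization $x = \mu_\condt + \cholt z$ with $\Sigma_\condt = \cholt\cholt^{\top}$, $z\sim\Normal(0,\bbI)$, under which $\widetilde u_\condt$ is the pushforward of the time-derivative of the sample map; I would nonetheless present the direct density computation above as the primary argument.
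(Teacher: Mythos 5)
Your proof is correct and takes essentially the same route as the paper: both reduce the Fokker--Planck equation to the continuity equation for the drift $\deriv{\mu_\condt}{t} + \tfrac12\deriv{\Sigma_\condt}{t}\Sigma_\condt^{-1}(x-\mu_\condt)$ via the decomposition $\uq = \widetilde u_\condt + \Dt\nabla_x\log q_\condt$, and verify it with the same Gaussian identities for $\nabla_x\log q_\condt$ and $\deriv{}{t}\log q_\condt$. The only difference is direction of presentation (the paper constructs the continuity-equation drift first and then adds the $\Dt\nabla_x\log q_\condt$ correction, whereas you start from the claimed $\uq$ and peel the correction off), which is immaterial.
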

\vspace*{-.3cm}
Given $\uq$ corresponding to $q_\condt$, we can simply solve for the $v_\condt$ satisfying the Fokker-Planck equation in \labelcref{eq:doob_lagrangian_fp} in  our variational Doob objective (\cref{th:var_doob}). Since $\Dt$ was assumed to be invertible and the base drift $\ifbm{b_t}$ is known, we have
\begin{align}
    \vq(x) = \frac{1}{2} \left(\Dt\right)^{-1}\left( \uq(x) - \ifbm{b_t}(x) \right). \label{eq:vstar}
\end{align}
\normalsize
We may now evaluate terms involving $v_\condt$ in our Lagrangian objective in \labelcref{eq:doob_lagrangian_all} using \labelcref{eq:vstar} directly, without spending effort to solve an inner minimization over $v_\condt$ (thus addressing \cref{challenge2}).    

\begin{figure*}
\begin{minipage}{.57\textwidth}
\vspace*{-.6cm}
\begin{algorithm}[H]
    \caption{Training (Single Gaussian)}
  \begin{algorithmic}
    \STATE \textbf{Input}:  Reference drift $b_t$, diffusion matrix $\Dt$
    \STATE \phantom{\textbf{Input}:} Conditioning endpoints 
    \WHILE{not converged}
    \vspace*{.05cm}
    \STATE Sample $t \sim \cU (0, T)$
    \STATE Sample $x_t \sim q^{(\theta)}_{t|0,T}$ using \labelcref{eq:gaussian_firstorder}
    \STATE Calculate $\uq(x_t)$ using \labelcref{eq:uq}
    \STATE Calculate $\vq(x_t)$ using $\uq(x_t)$, $\ifbm{b_t}(x_t)$, \labelcref{eq:vstar} 
    \STATE Calculate $\cL = \langle \vq(x_t), \Dt ~ \vq(x_t) \rangle$ (\cref{th:var_doob})
    \STATE Update $\theta \gets \text{optimizer}(\theta, \nabla_{\theta}\cL)$
    \ENDWHILE
    \RETURN{$\theta$}
  \end{algorithmic}
  \label{alg:training}
  \end{algorithm}%
  \end{minipage}\hspace*{.02\textwidth}
  \begin{minipage}{.41\textwidth}
  \vspace*{-.6cm}
  \begin{algorithm}[H]
    \caption{Sampling Trajectories}
  \begin{algorithmic}
   \STATE \textbf{def} \texttt{get\_drift}($x_t$, $t$):
   \STATE \phantom{\textbf{def}} Evaluate $\mu_\condt^{(\theta)}$, $\Sigma_\condt^{(\theta)}$ at $t$%
   \STATE \phantom{\textbf{def}} \textbf{return} drift $\uq(x_t)$ using \labelcref{eq:uq}
\vspace*{.1cm}
      \STATE Sample initial state $x_0 \sim \cN(A, \sigma^2_{\text{min}})$ 
   \RETURN{SDESolve($x_0$, \texttt{get\_drift}, $T$)}
    \end{algorithmic}
  \label{alg:sampling}
  \end{algorithm}%
  \caption*{\normalsize Algorithms for training with a single Gaussian path (\cref{alg:training}) and sampling or generating transition paths at test time (\cref{alg:sampling}). Note that we sample from the marginals $q_\condt$ during training, but generate paths by simulating the \gls{SDE} \labelcref{eq:hv_sde}.}
    \end{minipage}
    \secvheader
    \vspace*{-.4cm}
  \end{figure*}

\paragraph{Optimization over $q_\condt$ satisfying Boundary Constraints}  
Given the ability to evaluate $\vq$ for a given $q_\condt \in \mathcal{Q}_{G}$ as above, our variational Doob objective in \labelcref{eq:doob_lagrangian} reduces to a single optimization over the marginals $q_\condt$ of a conditioned process which satisfies the boundary conditions \labelcref{eq:doob_lagrangian_fp_boundary}.

We consider parameterizing the mean $\mu_\condt$ and covariance $\Sigma_\condt$ of our Gaussian path $q_\condt$ using a neural network.  For simplicity, we consider a diagonal parameterization $\Sigma_\condt = \texttt{diag}(\{\sigma_{\condt, d}^2\}_{d=1}^D)$. We parameterize a neural network $\textsc{nnet}_\theta: [0,T] \times \xdomain \times \xdomain \rightarrow  \mathbb{R}^{\xdim}   \times \mathbb{R}^{\xdim} $ which inputs time $t$ and boundary conditions $x_0= A, x_T=B$, and outputs vectors of mean perturbations and per-dimension variances.   Finally, using index notation to separate the output, we construct 
\vspace*{-.4cm}
\begin{subequations}
\begin{align}
x_\condt &= \mu_\condt^{(\theta)} + \Sigma_\condt^{(\theta)} ~ \epsilon, \quad \text{where} \quad  \epsilon \sim \cN(0,\mathbb{I}_D) . \label{eq:x_firstorder} \\ 
& ~~\phantom{\Sigma} \mu_\condt^{(\theta)}
= \left(1-  \frac{t}{T} \right) A  + \frac{t}{T}~ B + \frac{t}{T} \left(1- \frac{t}{T} \right) \textsc{nnet}_\theta(t, A, B)_{[:\xdim]} 
\label{eq:mu_firstorder}
\\
& ~~\phantom{\mu} \Sigma_\condt^{(\theta)}
=
\frac{t}{T} \left(1- \frac{t}{T} \right)
\texttt{diag}\left( \textsc{nnet}_\theta(t, A, B)_{[\xdim:]} \right) 
+ \sigma^2_{\text{min}} \bbI_\xdim
. \label{eq:sigma_firstorder}  
\end{align}
\label{eq:gaussian_firstorder}
\end{subequations}
\normalsize
Crucially, our Gaussian parameterization addresses \cref{challenge1}, in that we can easily draw samples $x_\condt \sim q_\condt$ from our variational conditioned process \labelcref{eq:doob_lagrangian_fp} \textit{without simulating} the corresponding \gls{SDE} with nonlinear drift \labelcref{eq:hv_sde}.  
Further, the coefficients in \labelcref{eq:mu_firstorder,eq:sigma_firstorder} ensure that, as $t\rightarrow 0$ or $t\rightarrow T$, our parameterization satisfies the (smoothed) boundary conditions by design (\cref{challenge3}). Although we add $\sigma_{\text{min}}^2$ to ensure invertibilty of $\Sigma_\condt$ (see \labelcref{eq:uq}), we preserve $q_0(x_0) = \cN(x_0 \cond A, \sigma_{\text{min}}^2 \bbI_\xdim) \approx \delta(x_0 - A)$ and $ q_T(x_T) = \cN(x_T \cond B, \sigma_{\text{min}}^2 \bbI_\xdim) \approx \delta(x_T - B)$.

\paragraph{Reparameterization Gradients}
Having shown that our parameterization satisfies the constraints \labelcref{eq:doob_lagrangian_fp}-\labelcref{eq:doob_lagrangian_fp_boundary} by design,  we can finally optimize our variational Doob objective with respect to $q_\condt \in \cQ_G$ using the reparameterization trick \citep{kingma2013auto, rezende2014stochastic}. In particular, for the expectation at each $t$ in \labelcref{eq:doob_lagrangian}, we rewrite
\ifsmall
\begin{align}
\resizebox{\textwidth}{!}{\ensuremath{
    \nabla_{\theta} \mathbb{E}_{q_\condt^{(\theta)}(x)}\left[ \inner{\vqtheta(x)}{\Dt ~ \vqtheta(x)}\right] = \mathbb{E}_{\cN(\epsilon | 0, \bbI_\xdim)}\left[ \nabla_\theta \inner{\vqtheta\big(g(t,\epsilon;\theta)\big)}{\Dt ~ \vqtheta\big(g(t,\epsilon;\theta)\big)} \right]
    }} ,
   \nonumber
\end{align}
\normalsize
where $x = g(t,\epsilon;\theta)$ is the mapping in \labelcref{eq:gaussian_firstorder} and $\vqtheta$ depends on $\theta$ via $\mu_\condt^{(\theta)}$, $\Sigma_\condt^{(\theta)}$ in \labelcref{eq:uq}--\labelcref{eq:vstar}. 

\paragraph{Full Training Algorithm}
In practice, we sample a batch of times $\{t_i\}_{i=1}^M$ uniformly from the interval $t \in [0,T]$. For each time point, we approximate the gradient using a single-sample estimate of the expectation above (or \labelcref{eq:doob_lagrangian_all}), which yields a simulation-free training procedure. The full training algorithm is outlined in \cref{alg:training}.  

\paragraph{Sampling of Trajectories}  While we sample directly from $q^{(\theta)}_{t|0,T}$ during training, we can sample full trajectories which obey this sequence of marginals at test time (\cref{alg:sampling}). In particular, we simulate SDE trajectories with drift $\uq(x)$ and diffusion coefficient $\Dt$ using an appropriate solver. 
Note that this generation scheme sidesteps computationally expensive evaluation of the force field or base drift $\ifbm{b_t}(x_t)$.   We visualize example sampling trajectories in  \cref{fig:mueller-path-histograms}. 

\secvheader
\subsubsection{Second-Order Dynamics}
\secvheader
\label{sec:second_order}
 To handle the case of the second-order dynamics in \labelcref{eq:second_order},  we can adapt our recipe from the previous section with minimal modifications by extending the state space $\tpsx \in \xdomain$ to include velocities $\mom$, with $\tpsx = ( \tpsxbar , \mom) \in \mathbb{R}^{2\xdim}$.
However, note that the dynamics in \labelcref{eq:second_order} are no longer stochastic in the spatial coordinates $\tpsxbar$.  
To ensure invertibility of $\Dt$ and existence of the $h$-transform, we add a small nonzero diffusion coefficient in the coordinate space $\tpsxbar$, so that the reference process in \cref{eq:ref_sde} is given by
\small 
\begin{align}
   \hspace*{-.22cm} \ifbm{\tpsx_t} = \begin{bmatrix}
        \tpsxbar_t \\
        \ifbm{\mom_t}
    \end{bmatrix},\quad
    \ifbm{b_t}(\tpsx_t) = \begin{bmatrix}
        \ifbm{\mom_t}
        \\
         -\mass^{-1}  \nabla_x U(\tpsxbar_t) - \gamma \mom_t
    \end{bmatrix},\quad 
    \ifbm{\Xi_t} = 
    \begin{bmatrix}
        \xi_{\text{min}}\bbI_\xdim  & 0\\
         0 & \mass^{-\sfrac{1}{2}} \sqrt{2 \gamma k_B \mathcal{T}}
    \end{bmatrix}.  
    \label{eq:ref_second_order}
\end{align}
\normalsize
  All steps in our algorithm proceed in similar fashion to \cref{sec:first_order}.
We now parameterize 
$q_\condt(\tpsxbar , \mom) $ using 
$\textsc{nnet}_\theta: [0,T] \times \mathbb{R}^{2\xdim}  \times \mathbb{R}^{2\xdim}  \rightarrow  \mathbb{R}^{2\xdim}   \times \mathbb{R}^{2\xdim} $, which outputs mean perturbations and per-dimension variances to calculate $\mu_\condt^{\tpsxbar}, \mu_\condt^{\mom}$ and $\Sigma_\condt^{\tpsxbar}, \Sigma_\condt^{\mom}$ and sample $(\tpsxbar, \mom)$, as in \labelcref{eq:gaussian_firstorder}.   
{Note that we parameterize $\Sigma_\condt^{\tpsxbar}, \Sigma_\condt^{\mom}$ separately, matching the block diagonal form} of \labelcref{eq:ref_second_order}.
We calculate $v_\condt^{(q)}(\tpsxbar,\mom) \coloneqq [v_\condt^{\tpsxbar(q)},
        v_\condt^{\mom(q)}]$ from $u_\condt^{(q)}(\tpsxbar,\mom) = [ u_\condt^{\tpsxbar(q)},
        u_\condt^{\mom(q)}]$ as in \labelcref{eq:uq}--\labelcref{eq:vstar}, 
        with $\Dt^{-1} =(\frac{1}{2}\Xi_t \Xi_t^T)^{-1}$ given by \labelcref{eq:ref_second_order}.
The Lagrangian objective in \labelcref{eq:doob_lagrangian_all} minimizes the norm of the concatenated vector $v_\condt^{(q)}(\tpsxbar,\mom)$, which depends on the reference drift $b_t(\tpsxbar,\mom)$ in \labelcref{eq:ref_second_order}.

\secvheader
\subsubsection{Gaussian Mixture Paths}\label{sec:mixture}
\secvheader

Note that the true Doob $h$-transform may not yield marginal distributions which are unimodal Gaussians as in the previous section. To increase the expressivity of our variational family of conditioned processes, we now extend our parameterization to mixtures of Gaussians, $q_\condt \in \cQ_{\text{MoG}}^K$.   Given a set of $K$ mixture weights $\bm{w} \coloneqq \{w^k\}_{k=1}^K$ and component Gaussian paths $\{q_\condt^{k}\}_{k=1}^K$, the following identity allows us to obtain the drift $\uq$ of the corresponding mixture distribution $q_\condt$.   

\begin{restatable}{proposition}{mixture}\label{prop:mixture}
Given a set of processes $q_\condt^k(x)$ and mixtures weights $w^k$, the vector field satisfying the Fokker-Planck equation in \labelcref{eq:fpe_general} for the mixture $q_\condt(x) = \sum_k w^k q_\condt^k(x)$ is given by
\small 
\begin{align}
\uq(x) = \sum \limits_{k=1}^K \frac{w^k q_\condt^k(x)}{\sum_{j=1}^K w^j q_\condt^j(x)} u_\condt^{(q,k)}(x)\,, \label{eq:uq_mixture}
\end{align}
\normalsize
where $u_\condt^{(q,k)}(x)$ satisfies the Fokker-Planck equation in \labelcref{eq:fpe_general} for $q_\condt^{k}(x)$.
This identity holds for both first-order dynamics in spatial coordinates only or second-order dynamics in $x = (\tpsxbar, \mom)$.
\end{restatable}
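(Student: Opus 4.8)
The plan is to exploit the \emph{linearity} of the Fokker--Planck operator in \eqref{eq:fpe_general}: once each component $q_\condt^k$ solves this equation with its own drift $u_\condt^{(q,k)}$, the mixture should solve it with a drift obtained by density-weighted averaging of the component drifts. First I would record the hypothesis, namely that for each $k$,
\[
\deriv{q_\condt^k(x)}{t} = - \inner{\nabla_x}{q_\condt^k(x)\, u_\condt^{(q,k)}(x)} + \sum_{ij} (\Dt)_{ij}\, \deriv{^2}{x_i \partial x_j} q_\condt^k(x).
\]
Multiplying the $k$-th equation by the time-independent weight $w^k$ and summing over $k = 1, \dots, K$, the left-hand side collapses to $\deriv{q_\condt(x)}{t}$ with $q_\condt = \sum_k w^k q_\condt^k$, and the diffusion term collapses to $\sum_{ij}(\Dt)_{ij}\, \partial_{x_i}\partial_{x_j} q_\condt(x)$ by linearity of differentiation. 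So the whole statement reduces to matching the transport terms, i.e.\ showing $\sum_k w^k \inner{\nabla_x}{q_\condt^k\, u_\condt^{(q,k)}} = \inner{\nabla_x}{q_\condt\, \uq}$ for $\uq$ as in \eqref{eq:uq_mixture}.

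That matching is the one computation, and it is immediate at the level of the fluxes: multiplying the defining formula \eqref{eq:uq_mixture} through by $q_\condt(x) = \sum_j w^j q_\condt^j(x)$ cancels the normalizer and gives $q_\condt(x)\, \uq(x) = \sum_k w^k q_\condt^k(x)\, u_\condt^{(q,k)}(x)$ pointwise; applying the divergence and combining with the previous paragraph reproduces \eqref{eq:fpe_general} for $q_\condt$ term by term. For the second-order case I would simply note that none of these manipulations used the block structure of $\Dt$ or the spatial/velocity splitting $x = (\tpsxbar, \mom)$, so the identical argument applies with $\Dt = \frac{1}{2}\Xi_t \Xi_t^T$ given by \eqref{eq:ref_second_order} and the component drifts $u_\condt^{(q,k)}$ computed via \eqref{eq:uq}--\eqref{eq:vstar}.

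I do not expect a genuine obstacle here --- the proposition is essentially a restatement of linearity of the continuity/Fokker--Planck equation. The only point I would be careful to state is well-definedness of \eqref{eq:uq_mixture} on the set where $q_\condt(x) = 0$: there the transport flux $q_\condt\, \uq = \sum_k w^k q_\condt^k\, u_\condt^{(q,k)}$ still makes sense and vanishes, so \eqref{eq:fpe_general} continues to hold in flux (divergence) form, and $\uq$ may be assigned an arbitrary value on that null set without affecting the claim; assuming $\sum_j w^j q_\condt^j(x) > 0$ everywhere (as holds for the Gaussian-mixture parameterizations used in practice, since each component includes the $\sigma^2_{\mathrm{min}}$ floor) removes even this caveat.
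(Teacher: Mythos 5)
Your argument is correct and complete: linearity of the Fokker--Planck operator plus the flux identity $q_\condt(x)\,\uq(x) = \sum_k w^k q_\condt^k(x)\, u_\condt^{(q,k)}(x)$ is all that is needed, and your handling of the degenerate set $\{q_\condt = 0\}$ and of the second-order case (which indeed uses nothing about the block structure of $\Dt$) is careful and right. The one implicit hypothesis you rely on --- that the weights $w^k$ are constant in $t$, so that $\partial_t \sum_k w^k q_\condt^k = \sum_k w^k \partial_t q_\condt^k$ --- holds for the parameterization in \cref{sec:mixture} and is worth stating explicitly. The paper itself does not write out a proof; it defers to Theorem~1 of \citet{peluchetti2023diffusion}, whose diffusion-mixture-representation result is established by essentially the same linearity computation at the level of path measures and their marginals. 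Your version is more elementary and self-contained, which is a genuine advantage here since the proposition as stated is purely about the forward Kolmogorov equation and does not need the full path-measure machinery of the cited theorem; what the citation buys instead is the connection to the broader bridge-matching literature where this mixture construction originates.
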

\normalsize
Finally, we can calculate $\vq(x)$ by comparing $\uq(x)$ for the mixture of Gaussian path $q_\condt \in \cQ_{\text{MoG}}^K$ to the reference drift $b_t(x)$ as in \labelcref{eq:vstar}, and proceed to minimize its norm as in \cref{eq:doob_lagrangian_all}. In practice, we use Gumbel softmax reparamerization gradients \citep{maddison2016concrete, jang2017categorical} to optimize the mixture weights $\{w^k\}_{k=1}^K$ alongside the neural network parameters $\{\theta^k\}_{k=1}^K$ for each Gaussian component $\{\mu_{\condt}^{(\theta^k)}, \Sigma_{\condt}^{(\theta^k)} \}_{k=1}^K$ and either first- or second-order dynamics.

\newcommand{\suborpara}[1]{\paragraph{#1}}

\secvheader
\section{Related Work}\label{sec:related}
\secvheader

\suborpara{(Aligned) Schrödinger Bridge Matching Methods}
Many existing `bridge matching' approaches \citep{shi2023diffusion, peluchetti2021nondenoising, peluchetti2023diffusion, liu2022let, lipman2022flow, liu2023i2i} for \gls{SB} and generative modeling rely on convenient properties of Brownian bridges and would require calculating $h$-transforms to simulate bridges for general reference processes. Our conditional Gaussian path parameterization is similar to \citet{liu2023generalized, neklyudov2023computational}, where analytic bridges are not available for \gls{SB} problems with nonlinear reference drift or general costs.

\citet{somnath2023aligned,liu2023i2i} attempt to solve the \gls{SB} problem given access to aligned data $x_0, x_T \sim q^{\text{data}}_{0,T}$ assumed to be drawn from an optimal coupling. While the method in \citet{somnath2023aligned} involves approximating an $h$-transform, their goal is to obtain an unconditioned vector field $v_t$ to simulate a Markov process.   However, \citet{de2023augmented} use Doob's $h$-transform to argue the learned Markov process will not preserve the empirical coupling unless $q^{\text{data}}_{0,T}$ is the optimal coupling for the \gls{SB} problem, and show that an `augmented' $v_{0,t}$ which conditions on $x_0$ can correct this issue.   

After training on a dataset of $x_0, x_T \sim q^{\text{data}}_{0,T}$ pairs using our method, we 
could consider using an (augmented) bridge matching objective \citep{shi2023diffusion, de2023augmented} to distill our learned $v_\condt^{(q)}$ into a vector field $v_{t}$ or $v_{0,t}$ which does not condition on the endpoint.  Our use of a Gaussian path parameterization with samples from a fixed endpoint coupling and no Markovization step corresponds to a simplified 
version of the conditional optimal control step in \citet{liu2023generalized}.

\suborpara{Transition Path Sampling}
We refer to the surveys of \citet{dellago2002transition, weinan2010transition, bolhuis2021transition} for an overview of the \gls{TPS} problem. Least action principles for \gls{TPS} have a long history, building upon the Freidlin-Wentzell \citep{freidlin1998random} and Onsager-Machlup \citep{onsager1953fluctuations, durr1978onsager} Lagrangian functionals in the zero-noise limit and finite-noise cases. In particular, the Onsager-Machlup functional relates maximum a posteriori estimators or `most probable (conditioned) paths' to the minimizers of an action functional similar to \cref{th:var_doob}, where example algorithms include \citep{vanden2008geometric, sheppard2008optimization}. By contrast, our approach targets the \emph{entire} posterior over transition paths using an expressive variational family. While \citet{lu2017gaussian} provide analysis for the Gaussian family, we draw connections with Doob's $h$-transform and extend to mixtures of Gaussians.

Shooting methods are among the most popular for sampling the posterior of transition paths. From a path that satisfies the boundary conditions (obtained, e.g., using high-temperature simulations), shooting picks points and directions to propose alterations, then simulates new trajectories and accepts or rejects using \gls{MH} \citep{juraszek2008rate, borrero2016avoid, jung2017transition, falkner2023conditioning, jung2023machine}. While the \gls{MCMC} corrections yield theoretical guarantees, shooting methods involve expensive \gls{MD} simulations and need to balance high rejection rates with large changes in trajectories. One-way shooting methods sample paths efficiently but yield highly correlated samples. Two-way shooting methods, which we compare to in \cref{sec:experiments}, are more expensive but typically sample diverse paths faster.  Recent machine learning approaches (e.g. \citet{plainer2023transition, lelivre2023}) aim to 
reduce the need for \gls{MD}.  

Finally, various related methods rely on iterative simulation of \gls{SDE} in \labelcref{eq:hv_sde} during training to learn the control drift term. \citet{yan2022learning,holdijk2024stochastic} are motivated from the perspective of stochastic optimal control, while \citet{das2021reinforcement, rose2021reinforcement} develop actor-critic methods using closely-related ideas from soft reinforcement learning.   
The variational method in \citet{das2019variational} optimizes the rate function quantifying the probability of the rare events, while \citet{singh2023variational} solves the Kolmogorov backward equation to learn the Doob's $h$-transform. 
However, all of these methods may be inefficient if the desired terminal state is sampled infrequently.

\secvheader
\secvheader 
\vspace*{-.05cm}
\section{Experiments}\label{sec:experiments}
\secvheader
\secvheader

We investigate the capabilities of our approach across a variety of different settings. We first illustrate features of our method on toy potentials before continuing to real-world molecular systems, including a commonly-used benchmark system, alanine dipeptide, and a small protein, Chignolin. The code behind our method is available at \ourCode{}. Before diving into the experiments, we introduce the evaluation procedure and baseline methods.

\paragraph{Evaluation metrics} In our evaluation, we emphasize two key quantities: accuracy and efficiency. Efficiency is evaluated by the number of calls to the potential energy function, which requires extensive computation and dominates the runtime of larger molecules. For accuracy, we evaluate the log-likelihood of each sampled path and the maximum energy point (saddle point/transition state) along each sampled path. A good method samples many probable paths (i.e., high log-likelihood) and an accurate transition state (i.e., small maximum energy). See \Cref{appendix:exp_details} for further details.

\paragraph{Baselines} We compare our approach against the \gls{MCMC}-based two-way shooting method with uniform point selection with variable or fixed length trajectories. 
We found that two-way shooting produced the most diverse path ensembles among possible baselines, although the acceptance probability can be relatively low for systems dominated by diffusive dynamics~\citep{brotzakis2016onewayshooting} and might be improved by better shooting point selection. 
This baseline gives theoretical guarantees about the ensemble and thus can be considered as a proxy for the ground truth. In that sense, our goal is not to beat two-way shooting but to approximate it with fewer potential evaluations. 

\begin{figure*}[t]
\vspace{-.3cm}
\begin{minipage}{.48\textwidth}
    \centering
    \begin{subfigure}[t]{0.47\textwidth}
        \includegraphics[width=0.9\linewidth]{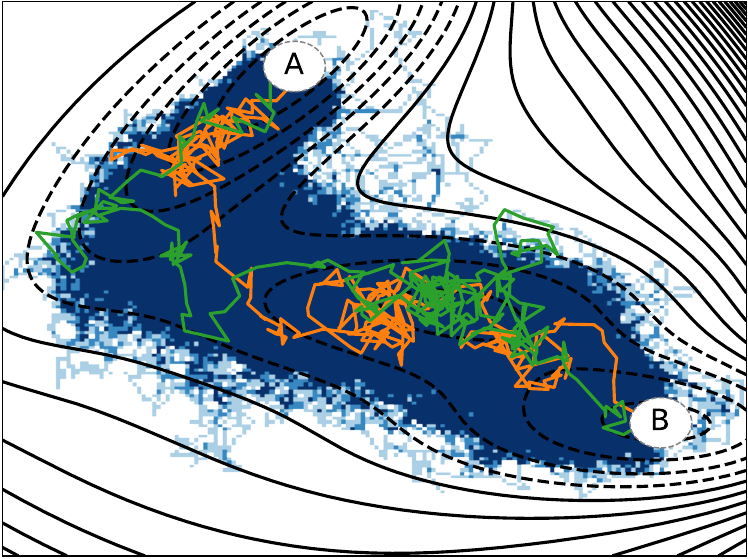}
        \caption{MCMC}
    \end{subfigure}
    \begin{subfigure}[t]{0.47\textwidth}
        \includegraphics[width=0.9\linewidth]{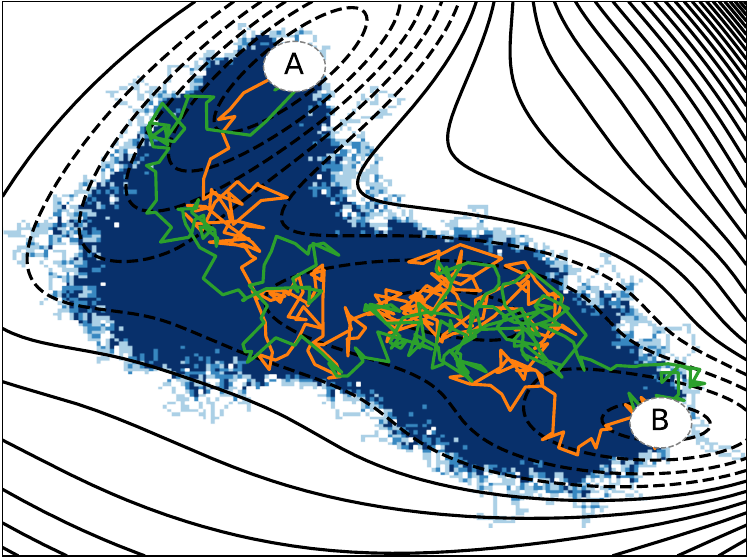}
        \caption{Ours}
    \end{subfigure}
    \caption{\small Comparing path histograms and trajectories of 
    TPS using fixed-length two-way shooting and comparing it with our variational approach.
    }
    \label{fig:mueller-path-histograms}
\end{minipage}\hspace*{.04\textwidth}
\begin{minipage}{.48\textwidth}
    \centering
    \begin{subfigure}[t]{0.47\textwidth}
        \includegraphics[width=0.9\linewidth]{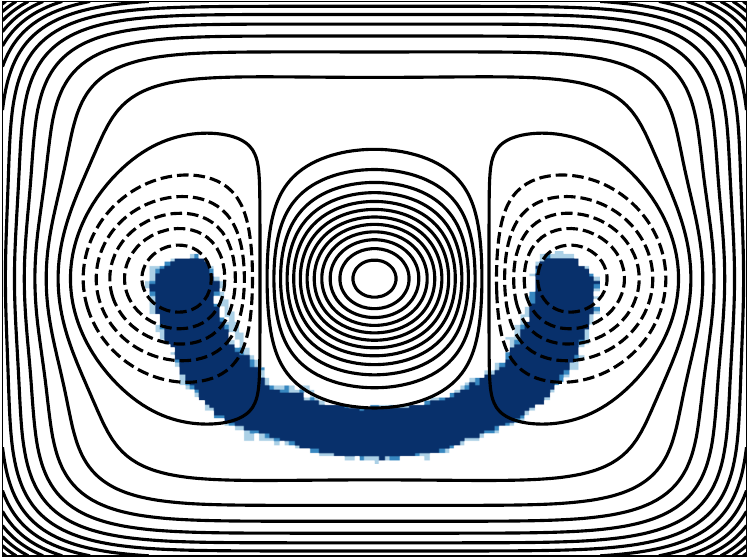}
        \caption{Single Gaussian}
        \label{subfig:toy-gaussian-single}
    \end{subfigure}
    \begin{subfigure}[t]{0.47\textwidth}
        \includegraphics[width=0.9\linewidth]{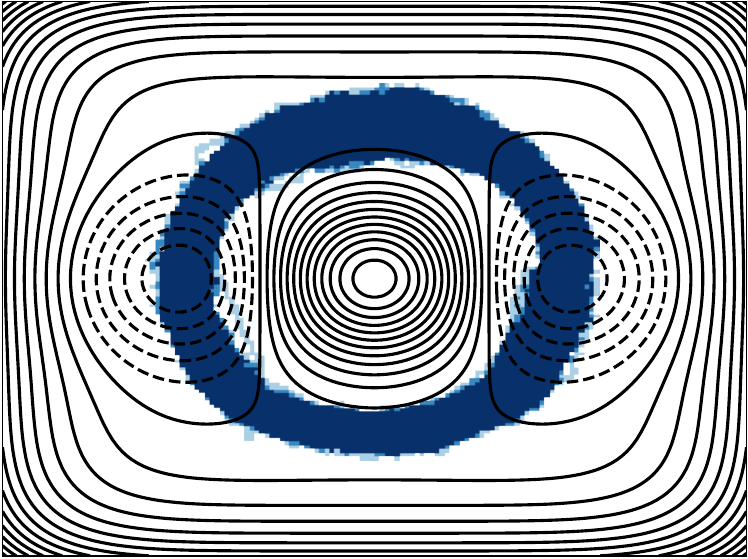}
        \caption{Mixture of Gaussians}
        \label{subfig:toy-gaussian-mixture}
    \end{subfigure}
    \caption{\small Illustration of the expressivity of unimodal Gaussian versus mixture of Gaussian paths on a symmetric potential with two transition path modes.}
    \label{fig:paths-with-and-without-mixture}
\end{minipage}
\end{figure*}
\begin{table}[t]
\resizebox{\columnwidth}{!}{
\large
\begin{tabular}{l|c|cc|cc} \hline
Method & \# Evaluations ($\downarrow$) & Max Energy ($\downarrow$) & MinMax Energy ($\downarrow$) & Log-Likelihood ($\uparrow$) & Max Log-Likelihood ($\uparrow$)   \\\hline 
MCMC (variable) & 3.53M & -13.77 $\pm$ 16.43 & -40.75 & - & - \\ 
MCMC & 1.03B & -17.80 $\pm$ 14.77 & -40.21 & 866.56 $\pm$ 17.00 & 907.15  \\
Ours & \textbf{1.28M} & -14.81 $\pm$ 13.73 & -40.56 & 858.50 $\pm$ 17.61 & 909.74  \\
\bottomrule
\end{tabular}
}
\vspace{0.2cm}
\caption{Transition path sampling experiment for Müller-Brown potential. We report the number of potential evaluations needed to sample 1,000 paths, as well as the maximum energy and the likelihood of each path (including mean and standard deviation). The methods marked with `variable' use a variable length setting. MinMax energy reports the lowest maximum energy across all paths (i.e., energy of lowest transition state).}
\label{table:muller}
\secvheader
\vspace*{-.4cm}
\end{table}

\secvheader 
\subsection{Synthetic Datasets}
\secvheader 

\paragraph{Müller-Brown Potential} The Müller-Brown potential is a popular benchmark to study transition path sampling between metastable states. It consists of three local minima, and we aim to sample transition paths connecting state $A$ and state $B$ with a circular state definition. In \Cref{fig:mueller-path-histograms}, we visualize the potential and the sampled paths and can see that the same ensemble is sampled {for both our method and two-way shooting}. Our method exhibits a slightly reduced variance for unlikely transitions.
In \Cref{table:muller}, we can observe that MCMC-based methods require many potential evaluations to achieve a good result, which comes from the low acceptance rate (especially when fixing the lengths of trajectories). Our method requires fewer energy evaluations (1 million vs. 1 billion) while finding paths with similar energy and likelihood. 
Note that the likelihood for variable approaches has been omitted, as it is governed by the number of steps in the trajectory and cannot be compared directly. 

\paragraph{Gaussian Mixture} We further consider a potential in which the states are separated by a symmetric high-energy barrier that allows for two distinct reaction channels. In \cref{fig:paths-with-and-without-mixture}, we observe that a single Gaussian path cannot model a system with multiple modes of transition paths. Nevertheless, this issue can be resolved using a mixture of Gaussian paths, with slightly increased computational cost.

\paragraph{The Case for Neural Networks}
According to our empirical study, the neural network parameterization of the Gaussian distribution statistics $\mu_{t_m|0,T}, \Sigma_{t_m|0,T}$ is an 
invaluable
part of our framework.
As an ablation, we consider parameterizing $\mu_{t_m|0,T}, \Sigma_{t_m|0,T}$ as piecewise linear splines whose intermediate points are updated using the same gradient-based optimizer as used for neural network training.  In \cref{app:splines},
we report results comparing the W1 distance of learned marginals using neural network versus spline parameterizations, observing that splines yield inferior results even after an order of magnitude more potential function evaluations.   We thus conclude that spline parameterizations are not competitive for learning transition paths and continue to focus on our neural-network approach.

\begin{table}
\vspace*{-.3cm}
\centering
\large
\resizebox{0.8\columnwidth}{!}{
\begin{tabular}{l|c|c|cc} \hline
Method & States & \# Evaluations ($\downarrow$) & Max Energy ($\downarrow$) &  MinMax Energy ($\downarrow$) \\\hline 
MCMC (variable length)& CV & 21.02M & 740.70 $\pm$ 695.79 & 52.37  \\
MCMC* & CV & 1.29B* & 288.46 $\pm$ 128.31 & 60.52  \\
\midrule
MCMC (variable length) & relaxed & 187.54M & 412.65 $\pm$ 334.70 & 26.97 \\
MCMC & relaxed & > 10B & N/A & N/A \\
\midrule
MCMC (variable length) & exact & > 10B & N/A & N/A  \\
MCMC & exact & > 10B & N/A & N/A  \\
Ours (Cartesian) & exact & \textbf{38.40M} & 726.40 $\pm$ 0.07  & 726.18   \\
Ours (Cartesian, 2 Mixtures) & exact  & 51.20M & 709.38 $\pm$ 162.37 & 513.72 \\ 
Ours (Cartesian, 5 Mixtures) & exact  & 51.20M & 541.26 $\pm$ 278.20 & 247.96  \\
Ours (Internal) & exact & \textbf{38.40M} & -14.62 $\pm$ 0.02 & -14.67  \\
Ours (Internal, 2 Mixtures)& exact & 51.20M & -15.38 $\pm$ 0.14 & -15.54\\
Ours (Internal, 5 Mixtures)& exact & 51.20M &  -15.50 $\pm$ 0.31 & \textbf{-15.95}  \\
\bottomrule
\end{tabular}
}
\vspace{0.2cm}
\caption{
Transition path sampling for alanine dipeptide.
For MCMC methods, we compare different state definitions of $\mathcal{A}, \mathcal{B}$: `CV' uses $\phi, \psi$ angles.  `Exact' uses a very small threshold of aligned root-mean-square deviation (RMSD) around reference states $A,B$ (as in Ours). `Relaxed' uses a larger threshold of RMSD around $A,B$. The method marked with a * only samples 100 paths due to computational limitations, while others sample 1,000.  Fields with N/A are intractable as a single trajectory requires more than 1 billion potential evaluations. }
\label{table:aldp}
\secvheader 
\secvheader
\vspace*{-.2cm}
\end{table}
\begin{figure}[t]
\vspace*{-.05cm}
    \centering
    \includegraphics[width=0.65\linewidth]{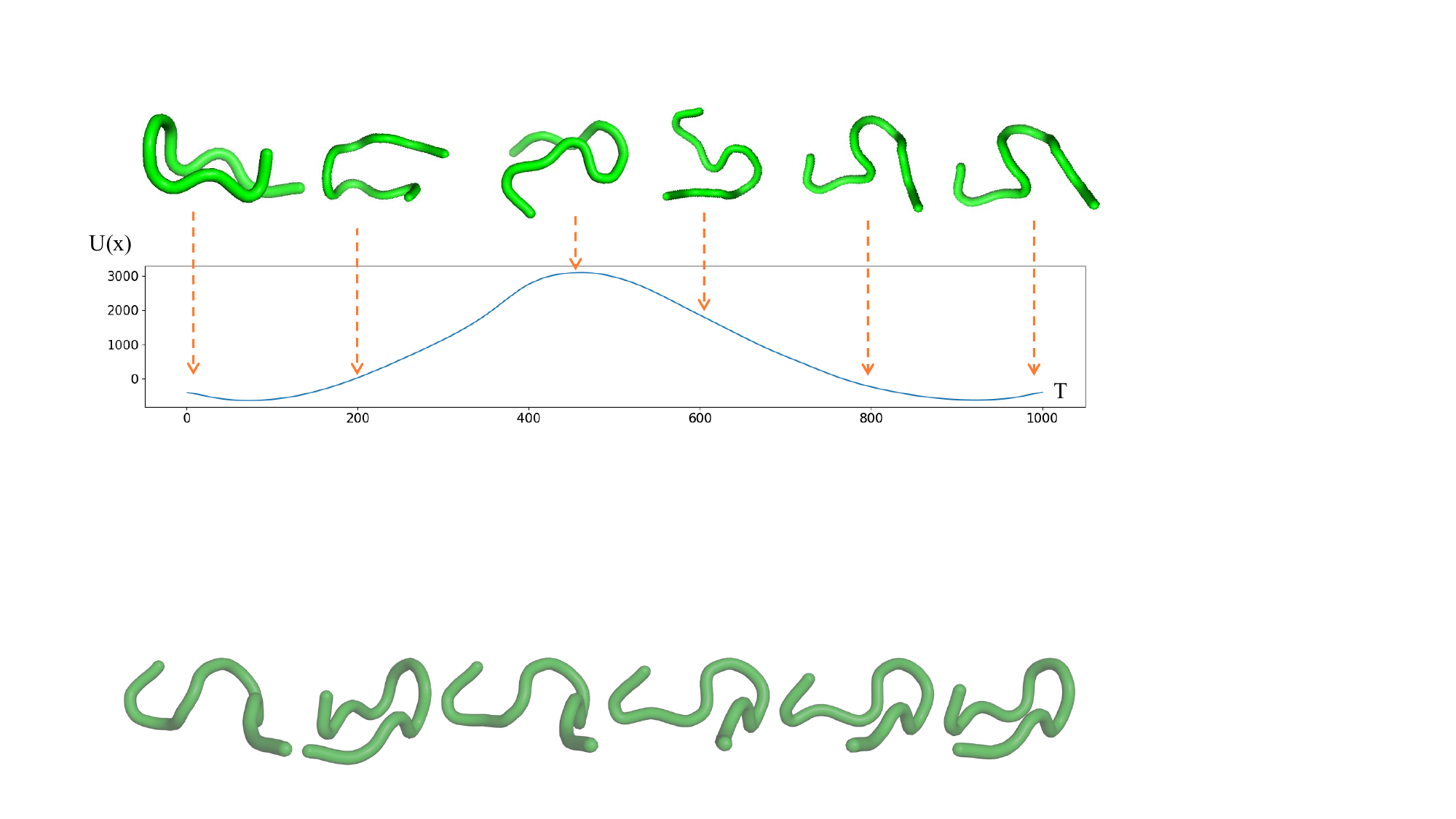}
    \caption{Transition path for the protein Chignolin. 
    The energy plot a transition path in which the protein folds in $T=1,000$ fs, and passes a high energy barrier at $460fs$ with about $3,000 $ kJ/mol.}
    \label{fig:chignolin}
    \vspace*{-0.4cm}
\end{figure}

\secvheader
\subsection{Second-order Dynamics and Molecular Systems}
\secvheader 

\paragraph{Experiment Setup} We evaluate our methods on real-world high-dimensional molecular systems governed by the second-order dynamics \labelcref{eq:second_order}: \textit{alanine dipeptide} and \textit{Chignolin}. Alanine dipeptide is a well-studied system of 22 atoms (66 total degrees of freedom), where the molecule can be described by two collective variables (CV): the dihedral angles $\phi$, $\psi$. Chignolin is a larger system consisting of 10 residues with 138 atoms (414 total degrees of freedom) that cannot be summarized as easily. We use an AMBER14 force field~\citep{maier2015ff14sb} implemented in OpenMM~\citep{eastman2017openmm} but use DMFF~\citep{wang2023dmff} to backpropagate through the energy evaluations. 

\paragraph{Alanine Dipeptide} In \Cref{table:aldp}, we report results for four variants of our models, which either predict Cartesian coordinates or internal coordinates in the form of bond lengths and dihedral angles (compare \Cref{appendix:molecular-systems}), either with or without Gaussian mixture.  
{For our method, operating in internal coordinates yields better results compared to Cartesian coordinates, where the internal coordinates are distributed similarly to Gaussians and our network does need not learn equivariances \citep{du2022se}.} Similarly, Gaussian mixture paths perform slightly better than a single Gaussian path due to the additional expressiveness. We note that paths sampled with Gaussian mixture exhibit a larger variance in max energy as they represent multiple reaction channels. 

We find that prior-informed definitions of the desired initial and target states (i.e., CV) are necessary for \gls{MCMC} to work efficiently with fixed-length trajectories. Finding these CVs in practice is challenging and only possible in this instance because the molecule is small and well-studied. For the larger system size in \cref{table:aldp}, it becomes intractable to use \gls{MCMC} to connect precise states $A, B$ (`exact') instead of larger regions (`relaxed'), even with a single trajectory. {Variable length \gls{MCMC} with relaxed endpoint conditions with CV perform well on this task, but our method is competitive using fewer evaluations and more strict boundary conditions. Fixed-length \gls{MCMC}, even with prior-informed knowledge, can only find 100 trajectories while needing 50 times more potential evaluations compared to variable length. }

\paragraph{Chignolin} The folding dynamics of Chignolin already pose a challenge and have not yet been well-studied compared to alanine dipeptide. We illustrate the qualitative experimental results for this system in \Cref{fig:chignolin}. Operating in Cartesian space, our model samples a feasible transition within 25.6M potential energy evaluation calls and a transition with a duration of $T=1ps$.
\secvheader 
\vspace*{-.05cm}
\section{Conclusion, Limitations and Future Work}
\label{sec:conclusion}
\secvheader 
In this paper, we propose an efficient computational framework for transition path sampling with Brownian dynamics. We formulate the transition path sampling problem by using Doob's $h$-transform to condition a reference stochastic process, and propose a variational formulation for efficient optimization. Specifically, we propose a simulation-free training objective and model parameterization that imposes boundary conditions as hard constraints. We compare our method with MCMC-based baselines and show comparable accuracy with lower computational costs on both synthetic datasets and real-world molecular systems. Our method is currently limited by rigidly defining states A and B to be a point mass with Gaussian noise instead of any arbitrary set.   Finally, our method might be improved by accommodating variable length paths.

\secvheader 
\vspace*{-.1cm}
\begin{ack}
\secvheader 

The authors would like to thank Juno Nam and Soojung Yang for spotting the unphysical steric barrier in the original pair of initial and target states, Jungyoon Lee for spotting an error in the energy computation, and Guan-Horng Liu, Maurice Weiler, Hannes Stärk and Yanze Wang for helpful discussions. The work of Yuanqi Du and Carla P. Gomes was supported by the Eric and Wendy Schmidt AI in Science Postdoctoral Fellowship, a Schmidt Futures program; the National Science Foundation (NSF), the Air
Force Office of Scientific Research (AFOSR); the Department of Energy; and the Toyota Research Institute (TRI). The work of Michael Plainer was supported by the Konrad Zuse School of Excellence in Learning and Intelligent Systems (ELIZA) through the DAAD program Konrad Zuse Schools of Excellence in Artificial Intelligence, sponsored by the German Ministry of Education and Research and by Max Planck Society.
The work of Kirill Neklyudov was supported by IVADO.
\end{ack}

\bibliographystyle{apalike}
\bibliography{refs}

\clearpage
\appendix

\section{Proofs}
\label{appendix:proof}

\subsection{Proofs from \texorpdfstring{\cref{sec:doobs-h-transform}}{Sec. \ref{sec:doobs-h-transform}} (Doob's \texorpdfstring{$h$}{h}-Transform Background)}\label{app:doob}
\jamison*
\begin{proof}
    See \citet{jamison1975markov} for a simple proof based on Ito's Lemma, assuming smoothness and strict positivity of $h$.
\end{proof}

\doobpdes*
\begin{proof}
    Let $p(x_{t+s} = y\cond x_t=x)$ denote the transition probability of a reference diffusion process
\begin{equation}
  \resizebox{\textwidth}{!}{\ensuremath{  \dderiv{}{s}p(x_{t+s} = y\cond x_t=x) = -\inner{\nabla_y}{p(x_{t+s} = y\cond x_t=x) b_{t+s}(y)} + \sum\limits_{ij} (\Dt)_{ij} \dderiv{^2}{y_i\partial y_j}p(x_{t+s} = y\cond x_t=x),}}
\end{equation}
where $(\Dt)_{ij} = \frac{1}{2}\Xi_{t+s}\Xi_{t+s}^T$.

Now we condition the process on the end-point value $x_T \in \setB$, and we get another kernel, i.e.
\begin{align}
    p(x_{t+s}=y\cond x_t=x, x_T \in \setB) = \frac{p(x_T \in \setB\cond x_{t+s} = y)}{p(x_T \in \setB\cond x_{t} = x)}p(x_{t+s} = y\cond x_t=x)\,.
\end{align}
We let $\hset(x,t) = p(x_T \in \setB \cond x_t = x)$  denote the conditional probability over the desired endpoint condition given $x_t=x$.   
According to laws of conditional probability, we can describe how $\hset(x,t)$ changes in time using the unconditioned transition probability
\begin{align}
    \underbrace{p(x_T \in \setB\cond x_{t} = x)}_{\hset(x,t)} = \int dy\; \underbrace{p(x_T \in \setB\cond x_{t+s} = y)}_{\hset(y,t+s)}p(x_{t+s} = y\cond x_t=x)\,,
\end{align}

we take the derivative $\deriv{}{s}$ on both sides, and we get
\begin{align}
    0 = \int dy\; \left[p(x_{t+s} = y\cond x_t=x)\deriv{\hset(y,t+s)}{s} + \deriv{p(x_{t+s} = y\cond x_t=x)}{s}\hset(y,t+s)\right]\,.
\end{align}
Using the FP equation for the transition probability and integrating by parts, we have
\small 
\begin{align}
    0 = \int dy\; p(x_{t+s} = y\cond x_t=x)\left[\deriv{\hset(y,t+s)}{s} + \inner{\nabla_y \hset(y,t+s)}{b_t(y)} + \sum_{ij} (\Dt)_{ij} \deriv{^2}{y_i\partial y_j}\hset(y,t+s)\right]\,. \nonumber
\end{align}
\normalsize
Note that this holds $\forall x$, hence, we have
\begin{align}
    \deriv{\hset(y,t+s)}{s} + \inner{\nabla_y \hset(y,t+s)}{b_{t+s}(y)} + \sum_{ij} (\Dt)_{ij} \deriv{^2}{y_i\partial y_j}\hset(y,t+s) = 0\,, \nonumber
\end{align}
without any loss of generality we can set $t=0$
\begin{align}
    \deriv{\hset(y,s)}{s} + \inner{\nabla_y \hset(y,s)}{b_{s}(y)} + \sum_{ij} (\Dt)_{ij} \deriv{^2}{y_i\partial y_j}\hset(y,s) = 0\,. \label{eq:h_doob_pf}
\end{align}
as desired to prove the optimality condition in \cref{eq:doob_hjb}.

To prove \cref{eq:doob_fp}, denote $p(y,s) = p(x_s=y\cond x_0=x)$ and differentiate $p(x_{s}=y\cond x_0=x, x_T \in \setB) = \frac{\hset(y,s)}{\hset(x,0)}p(y,s)$ as
\begin{align*}
    \deriv{}{s}&p(x_{s}=y\cond x_0=x, x_T \in \setB) \\
    = ~&\frac{1}{\hset(x,0)}\left[p(y,s)\deriv{\hset(y,s)}{s} + \hset(y,s)\deriv{p(y,s)}{s}\right] \\
    = ~&\frac{1}{\hset(x,0)}\bigg[-\inner{\nabla_y \hset(y,s)}{p(y,s)b_{s}(y)} - p(y,s)\sum_{ij} (\Dt)_{ij} \deriv{^2}{y_i\partial y_j}\hset(y,s) \\
    &\qquad~ - \hset(y,s)\inner{\nabla_y}{p(y,s) b_{s}(y)} + \hset(y,s)\sum_{ij} (\Dt)_{ij} \deriv{^2}{y_i\partial y_j}p(y,s)\bigg] \nonumber\\
    = ~&-\inner{\nabla_y}{\frac{\hset(y,s)}{\hset(x,0)}p(y,s)b_s(y)} - p(y,s)\inner{\nabla_y}{2D\nabla_y \frac{\hset(y,s)}{\hset(x,0)}}  \\ 
    ~&\qquad~\pm \inner{\nabla_y p(y,s)}{2D\nabla_y \frac{\hset(y,s)}{\hset(x,0)}} +\sum_{ij} (\Dt)_{ij} \deriv{^2}{y_i\partial y_j}\left(\frac{\hset(y,s)}{\hset(x,0)}p(y,s)\right)\,, \nonumber
\end{align*}
Note that $\hset(x,0)$ can be pulled outside the differential operator because it is a function of $x$.
The PDE for the new kernel $p(y,s\cond \setB) = p(x_{s}=y\cond x_0=x, x_T \in \setB)$ (conditioned on the end-point) becomes
\begin{align}
    \deriv{}{s}p(y,s\cond \setB) =
    -\inner{\nabla_y}{p(y,s\cond \setB)\left(b_s(y) + 2D\nabla_y \log \hset(y,s)\right)} + \sum_{ij} (\Dt)_{ij} \deriv{^2}{y_i\partial y_j}p(y,s\cond \setB)\,.
\end{align}
which matches the desired PDE in \cref{eq:doob_fp} thereby proving the first two parts of \cref{prop:doob_pdes}.

Finally, to show \cref{eq:doob_shjb}, we index time using $t$ in \cref{eq:h_doob_pf} and change variables $\hset(x,t) = e^{s(x,t)}$,
\begin{align*}
    \deriv{e^{s(x,t)}}{t} + \inner{\nabla_x e^{s(x,t)}}{b_{t}(x)} + \sum_{ij} (\Dt)_{ij} \deriv{^2}{x_i\partial x_j}e^{s(x,t)} &= 0\,. \\
    e^{s(x,t)}\deriv{{s(x,t)}}{t} + e^{s(x,t)} \inner{\nabla_x s(x,t)}{b_{t}(x)} + \inner{\nabla
    }{\Dt \nabla e^{s(x,t)}} &= 0
\end{align*}  
Next, we simplify $ \inner{\nabla
    }{\Dt \nabla e^{s(x,t)}} =  \inner{\nabla
    }{\Dt  e^{s(x,t)} \nabla s(x,t)} = \inner{\nabla e^{s(x,t)}
    }{\Dt  \nabla s(x,t)} +  e^{s(x,t)} \inner{\nabla}{\Dt \nabla s(x,t)} =  e^{s(x,t)} \inner{\nabla s(x,t)
    }{\Dt  \nabla s(x,t)} +  e^{s(x,t)} \inner{\nabla}{\Dt \nabla s(x,t)} $ to finally write
    \begin{equation}
        e^{s(x,t)} \left( \deriv{{s(x,t)}}{t} +  \inner{\nabla_x s(x,t)}{b_{t}(x)}
       +  \inner{\nabla s(x,t)
    }{\Dt  \nabla s(x,t)} +  \sum_{ij} (\Dt)_{ij} \deriv{^2}{x_i\partial x_j} s(x,t) 
 \right)= 0 \nonumber
    \end{equation}
    which demonstrates \cref{eq:doob_shjb} since the inner term must be zero. 
\end{proof}

\subsection{Proofs from \texorpdfstring{\cref{sec:lagrangian}}{Sec. \ref{sec:lagrangian}} (Lagrangian Action Minimization for Doob's \texorpdfstring{$h$}{h}-Transform)}
\label{app:doobs_lagrangian}
\newcommand{\myh}{\mathfrak{h}_t}

We begin by proving \cref{th:minmax}, whose proof actually contains the initial steps needed to prove our main theorem \cref{th:var_doob}.   In both proofs,  we omit conditioning notation $q_t \gets q_\condt$ for simplicity and assume  $q_t(x) s_t(x) \rightarrow 0$ vanishes at the boundary $x \rightarrow \pm \infty $, which is used when integrating by parts in $x$.

\minmax*
\begin{proof}
Consider the following action functional
\begin{align*}
    \mathcal{S} =~& \min_{q_t,v_t} \int dt\;\int dx\; q_t(x) \inner{v_t(x)}{\Dt v_t(x)}\,,\\
    ~&\text{ s.t. } \;\; \deriv{q_t(x)}{t} = - \inner{\nabla_x}{q_t(x)\left(b_t(x) + 2\Dt v_t(x)\right)} + \sum_{ij} (\Dt)_{ij} \deriv{^2}{x_i\partial x_j}q_t(x)\,,\\
    ~&\phantom{\text{ s.t. } \;\; } q_0(x) = \delta(x-A), \;\; q_1(x) = \delta(x-B)\,.
\end{align*}
The Lagrangian of this optimization problem is
\begin{align}
    \mathcal{L} =  \int_0^T dt\;\int dx\; \left[q_t\inner{v_t}{\Dt v_t} + s_t\left(\deriv{q_t}{t} + \inner{\nabla}{q_t\left(b_t + 2\Dt v_t\right)} - \sum_{ij} (\Dt)_{ij} \deriv{^2}{x_i\partial x_j}q_t\right)\right]\,, \nonumber
\end{align}
where $s_t$ is the dual variable and we omit the optimization arguments, with $\mathcal{S}= \min_{q_t,v_t} \max_{s_t} \mathcal{L}$.
Swapping the order of optimizations under strong duality, we take the variation with respect to $v_t$ in an arbitrary direction $\myh$. Using $\Dt = \Dt^T$, we obtain
\begin{align}
    \frac{\delta \mathcal{L}}{\delta v_t}[\myh] &= 
     ~ q_t \inner{(\Dt + \Dt^T)v_t}{\myh} - q_t  \inner{2\Dt^T\nabla s_t}{\myh} = 0 \nonumber \\
   & \implies v_t = \nabla s_t\,, \label{eq:v_nabla_s}
\end{align}
Substituting into the above, we have
\begin{align}
    \mathcal{L} = \int_0^T dt\;\int dx\; \left[s_t\deriv{q_t}{t} - q_t\inner{\nabla s_t}{\Dt \nabla s_t} + s_t\inner{\nabla}{q_t b_t} - 
    s_t \inner{\nabla }{\Dt \nabla q_t}
   \right]\,. \label{eq:lagr2}
\end{align}
Integrating by parts in $t$ and in $x$, assuming that $q_t(x) s_t(x) \rightarrow 0$ as $x \rightarrow \pm \infty $,  yields
\small
\begin{align}
    \mathcal{L} &= \int dx ~  q_T s_T - \int dx~ q_0 s_0  + \int_0^T dt\;\int dx\; \left[-q_t \deriv{s_t}{t} - q_t\inner{\nabla s_t}{\Dt \nabla s_t} - q_t\inner{\nabla s_t }{b_t} + \inner{\nabla  s_t }{\Dt \nabla q_t} \right] \nonumber \\
    &= \int dx ~  q_T s_T - \int dx~ q_0 s_0  + \int_0^T dt\;\int dx\; \left[-q_t \deriv{s_t}{t} - q_t\inner{\nabla s_t}{\Dt \nabla s_t} - q_t\inner{\nabla s_t }{b_t} -  q_t \inner{\nabla  }{\Dt \nabla s_t } \right] \nonumber \\
    &=  \int dx ~  q_T s_T - \int dx~ q_0 s_0  -\int_0^T dt\;\int dx\; q_t \left[ \deriv{s_t}{t} +\inner{\nabla s_t}{\Dt \nabla s_t} +\inner{\nabla s_t }{b_t} +   \inner{\nabla  }{\Dt \nabla s_t } \right]
    \label{eq:x_and_t}
\end{align}
\normalsize
where in the second line, we integrate by parts in $x$ again.  
Enforcing $q_T(x) = \delta(x - B)$ and $q_0(x) = \delta(x-A)$ and recalling $\mathcal{S}= \min_{q_t} \max_{s_t} \mathcal{L}$ after eliminating $v_t$, we recover the optimization in the statement of the corollary.
\end{proof}

\begin{theorem*}\textup{\textbf{1.}}
\label{th:var_doob_app}
The following Lagrangian action functional has a unique solution which matches the Doob $h$-transform in \cref{prop:doob_pdes},
\begin{subequations}
\label{eq:doob_lagrangian_all_app}
\begin{align}
\hspace*{-.3cm}    &\mathcal{S} =~ \min_{q_\condt, v_\condt} \int_0^T dt\;\int dx\; q_\condt(x) \inner{v_\condt(x)}{\Dt ~  v_\condt(x)}\,, \label{eq:doob_lagrangian_app} \\[1.25ex]
   &\resizebox{.91\textwidth}{!}{\text{s.t.}~~ \ensuremath{
   \dderiv{q_\condt(x)}{t} = - \inner{\nabla_x}{q_\condt(x)\left(b_t(x) + 2\Dt~ v_\condt(x)\right)} + \mathlarger{\sum}\limits_{ij} (\Dt)_{ij} \dderiv{^2}{x_i\partial x_j}q_\condt(x), 
   }}
   \label{eq:doob_lagrangian_fp_app} \\
    &~\phantom{\deriv{q_\condt}{t}\hspace{.016\textwidth}}  q_0(x) = \delta(x-A), \qquad q_T(x) = \delta(x-B)\,.\label{eq:doob_lagrangian_fp_boundary_app}
\end{align}
\end{subequations}
Namely, the optimal $q_\condt^*(x)$ obeys \labelcref{eq:doob_fp} and the optimal $v_\condt^*(x) = \nabla_{\ifbm{x}} \log \hset(\ifbm{x},t) = \nabla_{\ifbm{x}} s(\ifbm{x},t)$ follows \labelcref{eq:doob_hjb} or \labelcref{eq:doob_shjb}.
\end{theorem*}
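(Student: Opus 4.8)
The plan is to treat \labelcref{eq:doob_lagrangian_all_app} as a constrained optimization and read off its optimality conditions from the Lagrangian, starting exactly where the proof of \cref{th:minmax} leaves off. First I would introduce a dual field $s_t(x)$ enforcing the Fokker--Planck constraint \labelcref{eq:doob_lagrangian_fp_app}, assume strong duality so that $\mathcal{S}=\min_{q_t,v_t}\max_{s_t}\mathcal{L}=\max_{s_t}\min_{q_t,v_t}\mathcal{L}$, and recall from that proof that stationarity in $v_t$ already gives $v_t=\nabla s_t$ (cf.\ \labelcref{eq:v_nabla_s}) and, after substituting back and integrating by parts in $t$ and $x$ under the decay assumption $q_t(x)\,s_t(x)\to 0$ as $x\to\pm\infty$, the reduced functional \labelcref{eq:x_and_t}. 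The only genuinely new ingredient needed for \cref{th:var_doob} is the stationarity condition in $q_t$, plus identification of the optimizers and a uniqueness argument.

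Next I would vary \labelcref{eq:x_and_t} with respect to $q_t$ at interior times $t\in(0,T)$: since $q_0,q_T$ are frozen to the Dirac masses in \labelcref{eq:doob_lagrangian_fp_boundary_app}, the boundary terms contribute no interior variation, so stationarity forces the bracketed integrand to vanish pointwise, which is precisely the Hamilton--Jacobi equation \labelcref{eq:doob_shjb} for $s^*_t$. Collecting the optimality conditions, the optimal pair $(q^*_t,s^*_t)$ then solves a coupled forward--backward system: \labelcref{eq:doob_shjb} for $s^*_t$, and---after substituting $v^*_t=\nabla s^*_t$ into \labelcref{eq:doob_lagrangian_fp_app}---the Fokker--Planck equation with drift $b_t+2\Dt\nabla s^*_t$ for $q^*_t$, closed by the endpoint conditions $q^*_0=\delta(\cdot-A)$, $q^*_T=\delta(\cdot-B)$. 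By \cref{prop:doob_sde,prop:doob_pdes} this is exactly the Schr\"odinger/Doob forward--backward system, whose solution has $s^*_t=\log\hpt(\cdot,t)$---so that $v^*_t=\nabla\log\hpt(\cdot,t)=\nabla s_B(\cdot,t)$ obeys \labelcref{eq:doob_hjb}--\labelcref{eq:doob_shjb}---and, plugging this $v^*_t$ back into \labelcref{eq:doob_lagrangian_fp_app}, turns that constraint into the conditioned Fokker--Planck equation \labelcref{eq:doob_fp}, so that $q^*_t=\condp_\condt=\rho_t(\cdot\cond x_0=A,x_T\in\setB)$, the marginals of the Doob process of \cref{prop:doob_sde}. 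Hence any optimizer matches the Doob $h$-transform.

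For uniqueness I would use convexity: under the substitution $m_t:=q_t v_t$, the objective $\int q_t\inner{v_t}{\Dt v_t}$ becomes $\int \inner{m_t}{\Dt m_t}/q_t$, the perspective of a convex quadratic and hence jointly convex in $(q_t,m_t)$, while \labelcref{eq:doob_lagrangian_fp_app}--\labelcref{eq:doob_lagrangian_fp_boundary_app} become affine constraints in $(q_t,m_t)$. Thus the program is convex, every stationary point is a global minimizer, and the optimal $q^*_t$ (hence $v^*_t=m^*_t/q^*_t$ wherever $q^*_t>0$) is the unique one compatible with the Schr\"odinger/Doob system above. As a cross-check, the path-measure reading of \cref{th:sb} writes the same objective as a KL divergence in its first argument over a convex set of bridge measures, which likewise has a unique minimizer.

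I expect the two genuinely delicate points, which I would only sketch, to be: (i) justifying the $\min$--$\max$ interchange and the integrations by parts when the endpoint marginals are Dirac masses, so that $s^*_t$ carries a singular terminal value ($\log\delta(\cdot-B)$) while $q^*_t$ remains a bona fide density for $t\in(0,T)$---the clean remedy is to impose the boundary conditions as limits of Gaussian-smoothed constraints, mirroring the $\sigma_{\min}$ regularization of \cref{sec:parameterization}, and pass to the limit; and (ii) upgrading the convexity observation into a rigorous uniqueness statement, since $\inner{m}{\Dt m}/q$ is convex but not strictly convex (it is positively $1$-homogeneous), so uniqueness must be extracted from the constraint structure together with uniqueness of solutions to the Schr\"odinger/Doob system rather than from strict convexity alone.
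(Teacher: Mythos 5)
Your proposal follows essentially the same route as the paper: it picks up from the Lagrangian dual constructed in the proof of \cref{th:minmax}, uses the stationarity condition $v_t=\nabla s_t$ and the integrated-by-parts functional \labelcref{eq:x_and_t}, obtains the conditioned Fokker--Planck equation \labelcref{eq:doob_fp} from the constraint with $v^*_t$ substituted (equivalently, stationarity in the multiplier $s_t$, which is how the paper phrases it) and the Hamilton--Jacobi equation \labelcref{eq:doob_shjb} from stationarity in $q_t$, and then identifies the resulting forward--backward system with Doob's $h$-transform exactly as the paper does via the backward Kolmogorov equation and the terminal boundary condition. The only genuine addition is your Benamou--Brenier-style convexification $m_t=q_t v_t$ as a route to uniqueness---the paper relies solely on uniqueness of solutions to the backward equation---and your caveats about the Dirac endpoint marginals and the non-strict convexity of the perspective function are well placed, since the paper's proof passes over both points silently.
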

\begin{proof}
    The proof proceeds from \labelcref{eq:lagr2} above,
    \begin{equation}
   \resizebox{.93\textwidth}{!}{\ensuremath{  \mathcal{S} = \min\limits_{q_t}\max\limits_{s_t} \mathcal{L} = \min\limits_{q_t}\max\limits_{s_t} \int_0^T dt\;\int dx\; \left[s_t\deriv{q_t}{t} - q_t\inner{\nabla s_t}{\Dt \nabla s_t} + s_t\inner{\nabla}{q_t b_t} - 
    s_t \inner{\nabla }{\Dt \nabla q_t}
   \right]\,.
   }}
   \label{eq:lagr}
   \end{equation}
We first show that the optimality condition with respect to $s_t$ yields the Fokker-Planck equation for $q_t$ in \cref{prop:doob_pdes} \labelcref{eq:doob_fp}, before deriving the PDE in \labelcref{eq:doob_hjb} as the optimality condition with respect to $q_t$.

\textit{Optimality Condition for \labelcref{eq:doob_lagrangian_all_app} recovers \cref{prop:doob_pdes} \labelcref{eq:doob_fp}:}
The variation with respect to $s_t$ of \labelcref{eq:lagr} is simple, apart from the intermediate term.   For a perturbation direction $\myh$, we seek
\begin{align}
   \int dx\; \frac{\delta (\sbullet)}{\delta s_t} ~ \myh = \frac{d}{d\eps}\left[- \int dx\;  q_t\inner{\nabla \left( s_t + \eps \myh \right)}{\Dt \nabla \left( s_t + \eps \myh \right)}\right]  \Big|_{\eps=0}  , \nonumber
\end{align}
where $\sbullet$ indicates the functional on the right hand side.  Proceeding to differentiate with respect to $\eps$, we use linearity to pull $\frac{d}{d\eps}$ inside the integral and apply it first to obtain $\frac{d}{d\eps}(s_t + \eps \myh) = \myh$.  Using the product rule, recognizing the symmetry of terms, and evaluating at $\eps = 0$, we are left with
\begin{align}
   \int dx\; \frac{\delta (\sbullet)}{\delta s_t} ~ \myh = \left[- 2 \int dx\;  q_t\inner{\nabla \myh }{\Dt \nabla s_t}\right] \overset{(i)}{=} \left[ \int dx\;  \myh \Big( 2 \inner{\nabla  }{q_t \Dt \nabla s_t} \Big) \right] \label{eq:hard_term}
\end{align}
where in $(i)$ we integrate by parts $x$.   

We are now ready to set the variation of \labelcref{eq:lagr} with respect to $s_t$ (in an arbitrary direction $\myh$) equal to zero.   Using \labelcref{eq:hard_term}, we have
\begin{align} 
    \frac{\delta \mathcal{L}}{\delta s_t}[\myh] =  0 &= \deriv{q_t}{t} + 2\inner{\nabla  }{q_t \Dt \nabla s_t} + \inner{\nabla}{q_t b_t} - 
    \inner{\nabla }{\Dt \nabla q_t} \nonumber  \\
    \implies ~~ 0 &=  \deriv{q_t}{t} + \inner{\nabla  }{q_t \Big( b_t + 2 \Dt \nabla s_t\Big)} - 
    \inner{\nabla }{\Dt \nabla q_t} 
\end{align}
which matches the desired optimality condition for the conditioned process in \cref{prop:doob_pdes} \labelcref{eq:doob_fp}.

\textit{Optimality Condition for \labelcref{eq:doob_lagrangian_all_app} recovers \cref{prop:doob_pdes} \labelcref{eq:doob_hjb}:}
Starting again from \labelcref{eq:lagr}, we take the variation with respect to $q_t$.   First, we repeat identical steps (integrate by parts in both $x$ and $t$) to reach \labelcref{eq:x_and_t},
\begin{align}
        \mathcal{L}  &=  \int dx ~  q_T s_T - \int dx~ q_0 s_0  -\int_0^T dt\;\int dx\; q_t \left[ \deriv{s_t}{t} +\inner{\nabla s_t}{\Dt \nabla s_t} +\inner{\nabla s_t }{b_t} +   \inner{\nabla  }{\Dt \nabla s_t } \right]
  \nonumber
\end{align}
where it is now clear that taking the variation with respect to $q_t$ and setting equal to zero yields
\begin{align}
\frac{\delta \mathcal{L}}{\delta q_t}[\myh] =  0 = \deriv{s_t}{t} +\inner{\nabla s_t}{\Dt \nabla s_t} +\inner{\nabla s_t }{b_t} +   \inner{\nabla  }{\Dt \nabla s_t } \label{eq:b8}
\end{align}
which is the desired PDE for $s(x,t) = \log \hset(x,t)$ in \labelcref{eq:doob_shjb}.   To obtain \labelcref{eq:doob_hjb}, we note an identity used to simplify the last term
\small 
\begin{align}
    \sum_{ij} (\Dt)_{ij} \deriv{^2}{x_i\partial x_j} \log h_t &= \inner{\nabla}{\Dt \nabla \log h_t} = \inner{\nabla}{\frac{1}{h_t}\Dt\nabla h_t} = -\frac{1}{h_t^2}\inner{\nabla h_t}{\Dt \nabla h_t} + \frac{1}{h_t} \inner{\nabla }{\Dt \nabla h_t}. \nonumber
\end{align}\normalsize
Now, substituting $s(x,t) = \log \hset(x,t)$  into \cref{eq:b8} and abbreviating $\log \hset(\sbullet, t) =  \log h_t(\sbullet)$, we obtain
\begin{align}
    \frac{1}{h_t}\deriv{h_t}{t} + \frac{1}{h_t^2}\inner{\nabla h_t}{\Dt \nabla h_t} + \frac{1}{h_t}\inner{\nabla h_t}{b_t} -\frac{1}{h_t^2}\inner{\nabla h_t}{\Dt \nabla h_t} + \frac{1}{h_t} \inner{\nabla }{\Dt \nabla h_t} &= 0\,, \nonumber \\
\implies   \quad  \deriv{h_t(x)}{t} + \inner{\nabla h_t(x)}{b_t(x)} + 
     \inner{\nabla }{\Dt \nabla h_t} &= 0,
\end{align}
which matches \labelcref{eq:doob_hjb} as desired.

The last equation defines the backward Kolmogorov equation for the diffusion process with the drift $b_t(x)$ and covariance matrix $\Dt$, i.e. the function $h_t(x)$ defines the conditional density $h_t(x) = p(x_T \in \setB' \cond x_t = x)$ for some set $\setB'$, which agrees with the forward process with the same drift and covariance.
The boundary condition $q_T(x) = \delta(x-B)$ together with the backward equation define the unique solution to this PDE.
Since the PDEs and the boundary conditions are the same as in Doob's $h$-transform, we have $h_t(x) = p(x_T = B \cond x_t = x)$.
\end{proof}

\sb*
\begin{proof}
    We use the Girsanov theorem \citep[Sec. 7.3]{sarkka2019applied} to calculate the KL divergence between the following two Brownian diffusions with fixed initial condition $x_0 = A$,
    \begin{align}
 \bbPref_{0:T}: \qquad \qquad 
    \ifbm{dx_t}&= \btofxt \cdot dt + \Xi_t~ \ifbm{dW_t} \,, \;\; \qquad \quad\ \qquad \qquad \qquad  \\
\bbQ^{v}_{0:T}:  \qquad \qquad
\ifbm{d\tpsx_t}&= \big(
    \btofxt + 
    2 \Dt ~  v_\condt(\ifbm{x}_\condt)
    \big)\cdot dt 
    + \Xi_t ~ \ifbm{dW_t}\,, \qquad   \label{eq:fp_q}
    \end{align}
    In particular, noting the difference of drifts is $ \btofxt + 
    2 \Dt ~  v_\condt(x_t) - \btofxt = 2 \Dt ~  v_\condt(x_t)$,  the likelihood ratio is given by
    \begin{align}
        \frac{d\bbQ^{v}_{0:T}}{d\bbPref_{0:T}} = \frac{q_\condt(x_{0}, ... x_T)}{\prob(x_{0}, ... x_T)} = \exp\Big\{  - \frac{1}{2} \int_0^T  
        \inner{2 \Dt ~  v_\condt(x_t)}{(\Dt)^{-1} ~ 2 \Dt ~  v_\condt(x_t)} dt \\
        - \int 2 \left( \Dt ~  v_\condt(x_t)\right)^T \Dt^{-1} dW_t       
        \Big\} \nonumber 
    \end{align}
We finally calculate the KL divergence, noting that, after taking the log, the expectation of the integral $\int (\sbullet) dW_t$ in the final term vanishes,
\begin{align}
D_{KL}[ \bbQ^v_{0:T} : \bbPref_{0:T} ] =  2 \int_0^T dt \; \int dx_t \; q_\condt(x_t) ~ \inner{v_\condt(x_t) }{\Dt ~ v_\condt(x_t)} ,
\end{align}
which matches \labelcref{eq:doob_lagrangian} up to a constant factor of 2 does not change the optimum.
We finally compare to the constraints in \cref{th:var_doob}.  First, it is clear that the diffusion in \labelcref{eq:fp_q} satisfies the Fokker-Planck equation in \labelcref{eq:doob_lagrangian_fp} \citep[Sec. 5.2]{sarkka2019applied}.   We respect \labelcref{eq:doob_lagrangian_fp_boundary} by optimizing over endpoint-constrained path measures, which yields
\begin{align}
   \cS = \min \limits_{\bbQ^v_{0:T} ~\text{s.t.} ~ \bbQ^v_0 = \delta_A, \bbQ^v_T = \delta_B} D_{KL}[ \bbQ^v_{0:T} : \bbPref_{0:T}  ]
\end{align}
as desired.
\end{proof}

\section{Gaussian Path Parameterizations}\label{app:gaussians}
\gaussianpath*
\begin{proof}
Consider the following identities for the Gaussian family of marginals $q_t(x) = \mathcal{N}(x | \mu_t, \Sigma_t)$, where we omit conditioning $q_t \gets q_\condt$ for simplicity of notation,
\begin{subequations}   
\begin{align}
 \hspace*{-.35cm} \log q_t(x) &= -\frac{1}{2}(x-\mu_t)^T\Sigma_t^{-1}(x-\mu_t) -\frac{d}{2}\log(2\pi)- \frac{1}{2}\log \det \Sigma_t\,, \\
    \nabla_x \log q_t(x) &= -\Sigma_t^{-1}(x-\mu_t)\,,\\
    \deriv{}{t} \log q_t(x) &= (x-\mu_t)^T\Sigma_t^{-1}\deriv{\mu_t}{t} +\frac{1}{2}(x-\mu_t)^T\Sigma_t^{-1}\deriv{\Sigma_t}{t}\Sigma_t^{-1}(x-\mu_t) -\frac{1}{2}
    {\text{tr}\left(\Sigma_t^{-1} \deriv{\Sigma_t}{t}\right)}\label{eq:dlogqdt}
\end{align}\label{eq:gaussian_identities}
\end{subequations}

\newcommand{\uode}{u^{\text{o}}_t}
We begin by solving for a vector field $\uode(x)$ that satisfies the continuity equation (where $\uode$ denotes the drift of an \textsc{ODE})
\begin{align}
    \deriv{q_t}{t} = -\inner{\nabla_x}{q_t \uode} &= - q_t \inner{\nabla_x}{ \uode} + \inner{\nabla_x q_t}{\nabla_x \uode}   \, \nonumber \\
    \implies   \deriv{}{t}\log q_t &= -\inner{\nabla_x}{\uode} - \inner{\nabla_x \log q_t}{\uode} \label{eq:log_prop}
\end{align}
The vector field satisfying this equation is 
\begin{align}
    \uode(x) = \deriv{\mu_t}{t} + \frac{1}{2}\deriv{\Sigma_t}{t}\Sigma_t^{-1}(x-\mu_t) 
\end{align}
which we can confirm using the identities in \cref{eq:gaussian_identities}.  
Indeed, for the terms on the RHS of \cref{eq:log_prop}, 
\begin{align*}
    -\inner{\nabla_x}{\uode} =~& -\frac{1}{2}\text{tr}\left(\Sigma_t^{-1}\deriv{\Sigma_t}{t}\right)\,, \\
    -\inner{\nabla_x \log q_t}{\uode} =~& \inner{\Sigma_t^{-1}(x-\mu_t)}{\deriv{\mu_t}{t}} + \frac{1}{2}(x-\mu_t)^T\Sigma_t^{-1}\deriv{\Sigma_t}{t}\Sigma_t^{-1}(x-\mu_t)\,.
\end{align*}
Putting these terms and the time derivative from \cref{eq:dlogqdt} into \cref{eq:log_prop} we conclude the proof.

However, we are eventually interested in finding the formula for the drift $u_t$ that satisfies the Fokker-Planck equation in \labelcref{eq:fpe_general}.
That is, to describe the same evolution of density $\deriv{q_t(x)}{t}$, the relationship between $u_t$ and $\uode$ is as follows
\begin{align*}
      \deriv{q_t(x)}{t}=  -\inner{\nabla_x}{q_t \uode} =~& - \inner{\nabla_x}{q_t ~ u_t} + \inner{\nabla_x}{ \Dt \nabla_x q_t} \\
     =~& - \inner{\nabla_x}{q_t ~ u_t} + \inner{\nabla_x}{ \Dt q_t \nabla_x \log q_t} \\
     =~& - \inner{\nabla_x}{q_t \underbrace{\left( u_t - \Dt \nabla_x \log q_t\right)}_{\uode}}
\end{align*}
Finally, we use the identities in \cref{eq:gaussian_identities} to obtain
\begin{align*}
    u_t = ~\uode + \Dt \nabla_x \log q_t =~& \deriv{\mu_t}{t} + \frac{1}{2}\deriv{\Sigma_t}{t}\Sigma_t^{-1}(x-\mu_t) - \Dt \Sigma_t^{-1}(x-\mu_t) \\
 \implies \quad u_t  = ~& \deriv{\mu_t}{t} + \left[\frac{1}{2}\deriv{\Sigma_t}{t}\Sigma_t^{-1} - \Dt \Sigma_t^{-1}\right](x-\mu_t)
\end{align*}

\end{proof}

\mixture*
\begin{proof}
    See \citet{peluchetti2023diffusion} Theorem 1 and its proof in their \cref{appendix:proof}.   
\end{proof}
\section{Extended Related Work}

\subsection{Machine Learning for Molecular Simulation}

The main dilemma of molecular dynamics comes from the accuracy and efficiency trade-off---accurate simulation requires solving the Schrödinger equation which is computationally intractable for large systems, while efficient simulation relies on empirical force fields which is inaccurate. Recently, there has been a surge of work in applying machine learning approaches to accelerate molecular simulation. One successful paradigm is machine learning force field (MLFF) which leverages the transferability and efficiency of machine learning methods to fit force/energy prediction models on quantum mechanical data and transfer across different atomic systems~\cite{smith2017ani,wang2018deepmd}. More recently, increasing attention has been focused on building atomic foundation models to encompass all types of molecular structures~\cite{batatia2023foundation,shoghi2023molecules,zhang2022dpa}. 

Sampling is a classical problem in molecular dynamics to draw samples from the Boltzmann distribution of molecular systems. Classical methods mainly rely on Markov chain Monte Carlo (MCMC) or MD which requires long mixing time for multimodal distributions with high energy barriers~\cite{rotskoff2024sampling}. Generative models in machine learning demonstrate promises in alleviating this problem by learning to draw independent samples from the Boltzmann distribution of molecular systems (known as Boltzmann generator)~\cite{noe2019boltzmann}. Numerous methods have been developed to utilize generative models as a proposal distribution for escaping local minima in running MCMC methods~\cite{gabrie2022adaptive}. However, one critical issue is that generative models rely on training from samples. Although recent advances have been developed to learn from unnormalized density (i.e., energy) function, the training inefficiency limits their applicability to solve high-dimensional molecular dynamics problems. To circumvent the curse of dimensionality for the sampling problem, another branch of work study to learn coarse-grained representation with neural networks~\cite{sidky2020molecular}. For broader literature of applying machine learning to enhanced sampling, we refer the reader to~\citet{mehdi2024enhanced}.
\section{Further Experimental Details}
\label{appendix:exp_details}

\subsection{Evaluation Metrics}

To assess the quality of our approach in terms of performance and physicalness of paths, we compare them under different metrics to well-established TPS techniques. One important describing factor of a trajectory is the molecule's highest energy during the transition. These high-energy states are often referred as transition states and less likely to occur but they determine importance factors during chemical reaction such as reaction rate. As such, we will look at the maximum energy along the transition path and use it to compare the ensemble of trajectories more efficiently. The main goal is to show that lower energy of the transition states can be sampled by the methods.

However, the maximum energy does not account for the fact that the transition path needs to be sequential, and each step needs to be coherent based on the previous position and momentum. For this, we also compare the likelihood of the paths (i.e., unnormalized density) by computing the probably of being in the start state \(\rho(x_0)\) and multiplying it with the step probability such that
\begin{align}
    L \left( x_0, x_1, \ldots, x_{N-1} \right) = \rho(x_0) \cdot \prod_{i=0}^{N-2} \pi \left( x_{i+1} \cond x_i \right) \,.
\end{align}
For the step probability \(\pi\), we solve the Langevin leap-frog implementation as implemented in OpenMM to solve \(\Normal(x_{i+1} \cond x_i + dt\cdot b_{t_i}(x),dt\sigma_i^2)\). As for the starting probability, we compute the unnormalized density of the Boltzmann distribution for our start state \(z\) and assume that the velocity \(v\) can be sampled independently~\citep[Sec.~4.6]{castellan1983physical}
\begin{align}
    \rho(z, v) \propto \exp \left( -\frac{U(z)}{k_BT} \right) \cdot \Normal \left( v \cond 0, k_B T \cdot M^{-1}\right)\,,
\end{align}
with the Boltzmann constant \(k_B\) and the diagonal matrix \(M\) containing the mass of each atom. 

As for the performance, the number of energy evaluations will be the main determining factor of the runtime for larger molecular systems, especially for proteins. We hence compare the use of the number of energy computations as a proxy for hardware-independent relative measurements. In our tests, this number aligned with the relative runtime of these approaches.  

\subsection{Toy Potentials}
The toy systems move according to the following integration scheme (first-order Euler)
\begin{align}
    x_{t+1} = x_t -dt\cdot \nabla_x U(x_t) + \sqrt{dt} \cdot \texttt{diag}(\xi) \cdot \varepsilon, \quad \varepsilon \sim \Normal(0,1)\,,
\end{align}
following the definition of our stochastic system in \cref{sec:doobs-h-transform} with a time-independent Wiener process, where $\xi$ is a constant time-independent standard deviation for all dimensions.

\paragraph{Müller-Brown}
The underlying Müller-Brown potential that has been used for our experiments can be written as
\begin{equation}
    \begin{aligned}
    U(x, y) = & -200 \cdot \exp \left( -(x-1)^2 -10 y^2 \right) \\
              & -100 \cdot \exp \left( -x^2 - 10 \cdot (y - 0.5)^2 \right) \\
              & -170 \cdot \exp \left( -6.5 \cdot (0.5 + x)^2 + 11 \cdot (x +0.5) \cdot (y -1.5) -6.5 \cdot (y -1.5)^2 \right) \\
              & + 15 \cdot \exp \left( 0.7 \cdot (1 + x)^2 +0.6 \cdot (x + 1) \cdot (y -1) +0.7 \cdot (y -1)^2 \right) \, .
    \end{aligned}
\end{equation}
We used a first-order Euler integration scheme to simulate transition paths with 275 steps and a $dt$ of $10^{-4}s$. $\xi$ was chosen to be 5 and 1,000 transition paths were simulated. We have used an MLP with four layers and a hidden dimension of 128 each, with swish activations. It has been trained for 2,500 steps with a batch size of 512 and a single Gaussian. 

In \cref{subfig:mueller-path-density}, we compare the likelihood of the sampled paths. We can see that one-way shooting takes time until the path is decorrelated from the initial trajectory, which is shorter and thus has a higher likelihood. All MCMC methods exhibit this behavior, which is typically alleviated by using a warmup period in which all paths are discarded. After that, all methods exhibit similar likelihood, with our method having a slightly lower likelihood. Looking at the transition state (i.e., maximum energy on the trajectory) in \cref{subfig:mueller-max-energy} reveals that all methods have a similar quality of paths.

\begin{figure}[htbp]
    \centering
    \begin{subfigure}[t]{0.3\textwidth}
        \includegraphics[width=\linewidth]{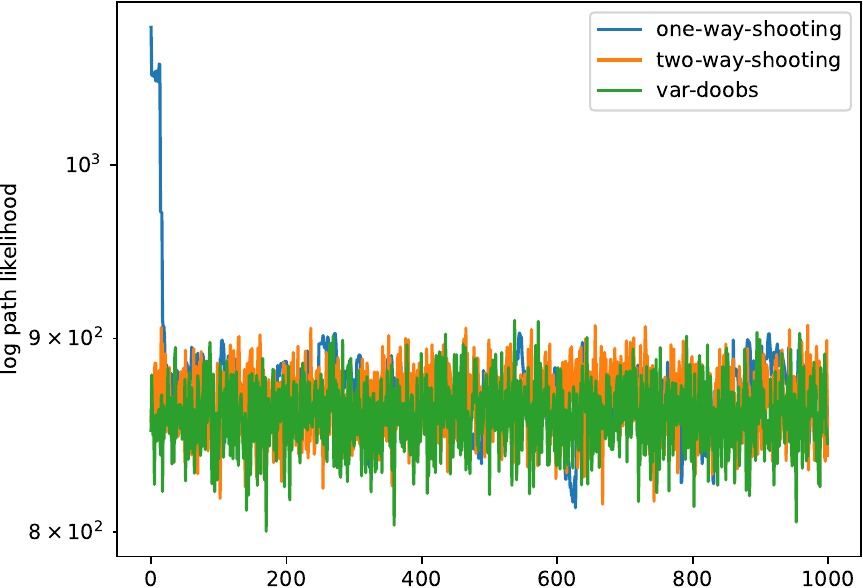}
        \caption{Log Path Likelihood}
        \label{subfig:mueller-path-density}
    \end{subfigure}
    \hspace{1cm}
    \begin{subfigure}[t]{0.3\textwidth}
        \includegraphics[width=\linewidth]{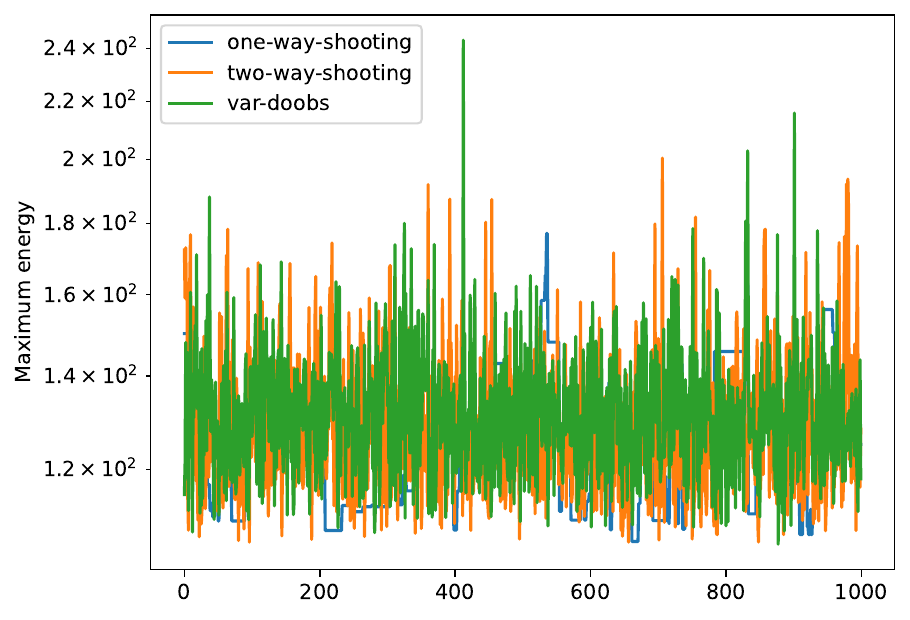}
        \caption{Maximum Energy}
        \label{subfig:mueller-max-energy}
    \end{subfigure}
    \caption{In \subref{subfig:mueller-path-density}, we compare the log likelihood of sampled trajectories, where a higher likelihood is generally more favorable.
    The plot in \subref{subfig:mueller-max-energy} shows the maximum energy of each individual trajectory. A high maximum energy means that the molecule needs to be in an excited state during the transition, making it less likely to occur under lower temperatures.
    }
\end{figure}

We can further analyze the quality of our method by investigating the difference between the \enquote{ground truth} marginal $\condp_{\pcondt}(\tpsx)$ and the learned marginal $q_\condt(x)$. For this, we compute the Wasserstein W1 distance~\citep{flamary2021pot} between the marginal observed by fixed-length two-way shooting (which we assume to be close to the ground truth) and our variational approach. We observe a mean W1 distance of $0.130 \pm 0.026$ and visualize it along the time coordinate $t$ (in steps) in \cref{fig:w1-distance}.

\begin{figure}[htbp]
    \centering
    \includegraphics[width=0.3\linewidth]{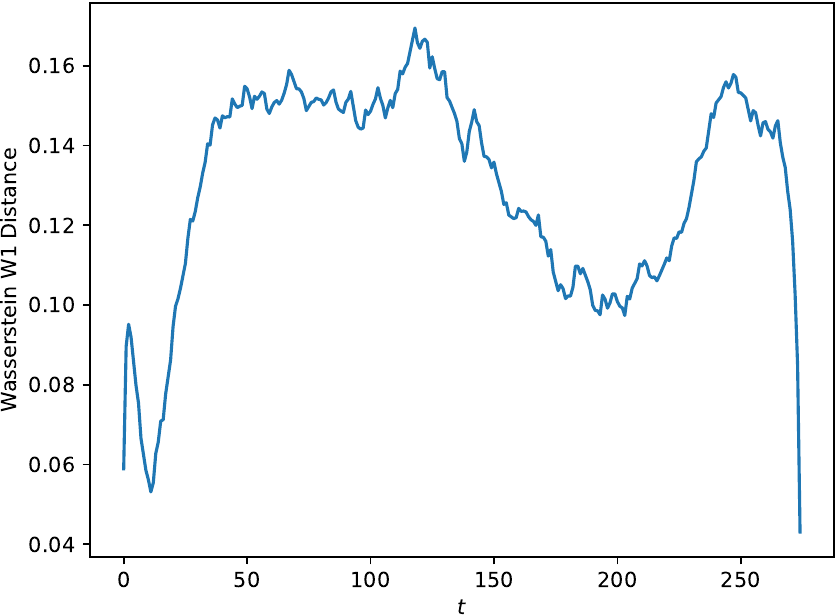}
    \caption{In this figure, we compare the Wasserstein W1 distance between the marginals. The densities are almost identical at the beginning and end state. Note that at the third local minimum of the Müller-Brown potential along the trajectory (i.e., reached at about $200$ steps), the marginals align more closely as well.
    }
    \label{fig:w1-distance}
\end{figure}

\paragraph{Dual-Channel Double-Well} To demonstrate the advantage of mixtures, we have used the two-dimensional potential
\begin{equation}   
    \begin{aligned}
    U(x, y) = & + 2 \cdot \exp \left( -(12 x^2 + 12 y^2) \right) \\
              & - 1 \cdot \exp \left( -(12 \cdot (x + 0.5)^2 + 12 y^2) \right) \\
              & - 1 \cdot \exp \left( -(12 \cdot (x - 0.5)^2 + 12 y^2) \right) + x^6 + y^6\, .
    \end{aligned}
\end{equation}
In this case, we have used $dt=5 * 10^{-4}s$ with a transition time of $T=1s$ and $\xi = 0.1$. As for the MLP, we have used the same structure as in the Müller-Brown example but trained it for 20,000 iterations. The corresponding weights to \cref{prop:mixture} are $w = [ \frac{1}{2}, \frac{1}{2} ]$ and are fixed for this experiment and hence $w \not\in \theta$. 

\subsection{Neural Network Ablation Study} \label{app:splines}
In \cref{fig:w1-distance-spline-nnet} we compare how different parameterizations of $\mu^{(\theta)}_\condt$, and  $\Sigma^{(\theta)}_\condt$ impact the quality of trajectories on the Müller-Brown potential. For this, we compare linear and cubic splines (with 20 knots) with neural networks. As a metric to estimate the quality, we compare the Wasserstein W1 distance between the learned marginal $q_\condt(x)$ and the marginal observed by fixed-length two-way shooting (i.e., baseline). We notice that using linear splines results in the highest W1 distance, while cubic splines improve the quality. Using neural networks, however, yields the best approximation.

We have fixed the computational budget for all systems, which means that we have trained splines for more epochs than the neural network (since they are slower to train). For high-dimensional systems, the runtime is mostly determined by the number of potential evaluations and not the complexity of the architecture. We thus conclude that the additional expressivity provided by neural networks is necessary for more complicated (molecular) systems and does not introduce much computational overhead.

\begin{figure}[htbp]
    \centering
    \includegraphics[width=0.3\linewidth]{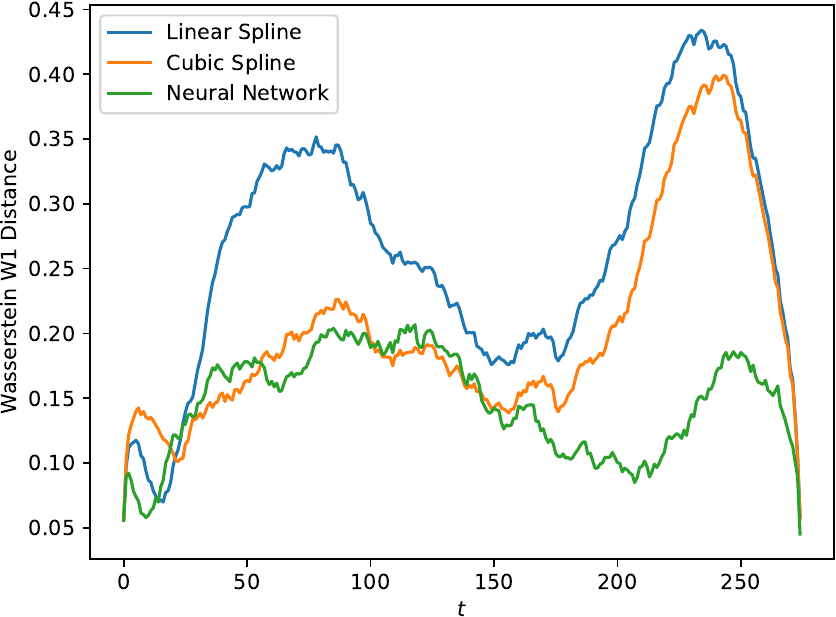}
    \caption{We compare the Wasserstein W1 distance between the marginals with different parameterization techniques of $\mu^{(\theta)}_\condt$, and  $\Sigma^{(\theta)}_\condt$. 
    }
    \label{fig:w1-distance-spline-nnet}
\end{figure}

\subsection{Molecular Systems} \label{appendix:molecular-systems}
To simulate molecular dynamics, we rely on the AMBER14 force field ($\text{amber14/protein.ff14SB}$~\cite{maier2015ff14sb}) without a solvent, as implemented in OpenMM \citep{eastman2017openmm}. As OpenMM does not support auto-differentiation, we do not use OpenMM for the simulations themselves, but utilize DMFF~\citep{wang2023dmff} which is a differentiable framework implemented in JAX~\citep{jax2018github} for molecular simulation. This is needed because during training we compute $\nabla_\theta U\left(x_\condt \sim \mathcal{N} (\mu^{(\theta)}_\condt, \Sigma^{(\theta)}_\condt) \right)$, where the concrete $x_\condt$ is sampled based on the parameters of the neural network. 

For the concrete simulations, we ran them with the timestep $dt = 1fs$, with $T=1ps$, $\gamma=1ps$, and $\text{Temp} = 300K$.
To compute the MCMC two-way shooting baselines, we use the same settings and consider trajectories as failed, if they exceed 2,000 steps without reaching the target.

\paragraph{Neural Network Parameterization}
We parameterize our model with neural networks, a 5-layer MLP with ReLU activation function and 256/512 hidden units for alanine dipeptide and Chignolin, respectively. The neural networks are trained using an Adam optimizer with learning rate $10^{-4}$.

We represent the molecular system in two ways: (1) in Cartesian coordinates, which are the 3D coordinates of each atoms, and with (2) internal coordinate which instead uses bond length, angle and dihedral angle along the molecule, where we use the same parameterization as in~\citep{noe2019boltzmann}.

Our state definition includes a variance parameter for the initial and target marginal distributions at $t=0$ and $t=T$, we choose the variance to be $10^{-8}$ which almost does not change the energy of the perturbed system.

\paragraph{Visualization of Transition for Alanine Dipeptide} In \cref{fig:aldp-transition-path}, we show a transition sampled without any noise from the model with internal coordinates and 2 Gaussian mixtures.

\begin{figure}[htbp]
    \centering
    \includegraphics[width=1\linewidth]{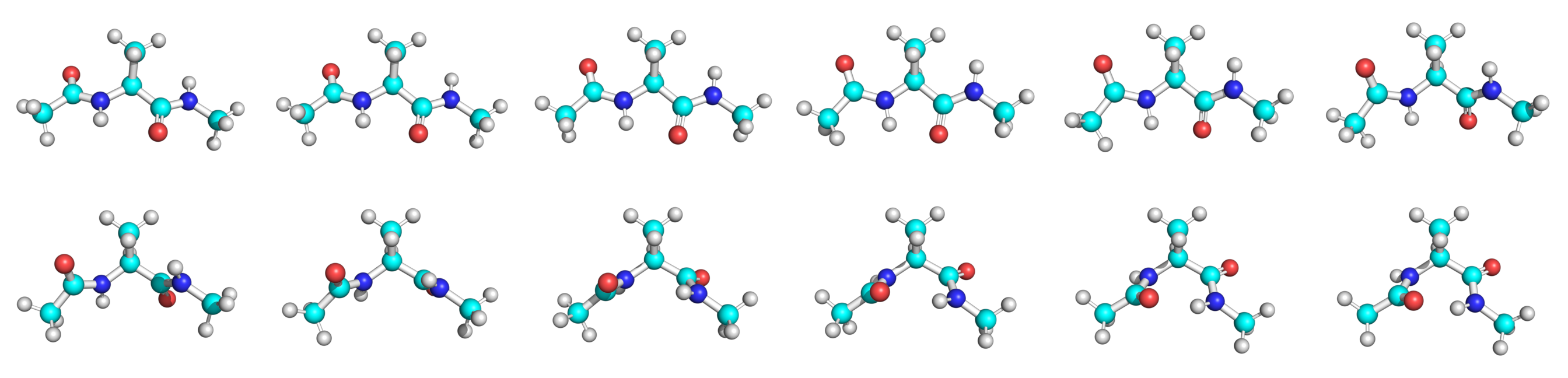}
    \caption{Transition path for the alanine dipeptide.}
    \label{fig:aldp-transition-path}
\end{figure}

\paragraph{Comparison of Sampled Paths During Training} In our training procedure, the marginal starts with a linear interpolation between $A$ and $B$, which produces very unlikely paths with potentially high energy states. In \cref{fig:aldp-energy-by-iteration}, we compare how the quality of paths changes depending on the number of training iterations (i.e., the number of potential evaluations). We show the curve for a single Gaussian mixture with Cartesian coordinates. Similar trends can be observed in other settings.

\begin{figure}[htbp]
    \centering
    \includegraphics[width=0.3\linewidth]{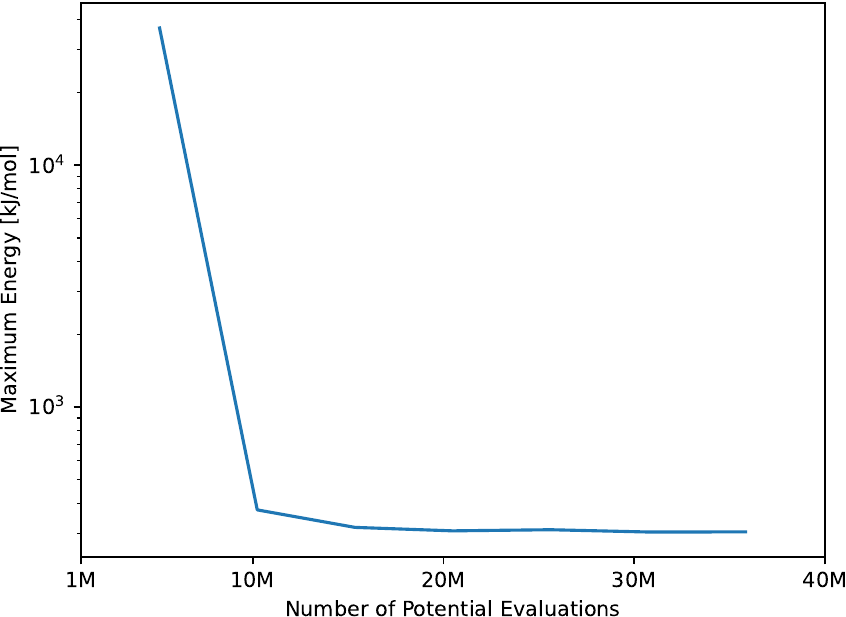}
    \caption{In this figure, we compare the quality of paths based on the current training step (i.e., potential evaluations). We observe that with increasing training time, paths with higher likelihood are sampled.}
    \label{fig:aldp-energy-by-iteration}
\end{figure}

\paragraph{Loss Curves} In this section, we would like to investigate the training losses of different configurations. For this, we plot the exponential moving average of the loss ($\alpha=0.001$) to better highlight the trends of the noisy variational loss. \cref{fig:aldp-loss} compares the results of different training settings. We can observe that mixtures can decrease the overall loss, but all model variations converge to a similar loss value.  

\begin{figure}[htbp]
    \centering
    \includegraphics[width=0.3\linewidth]{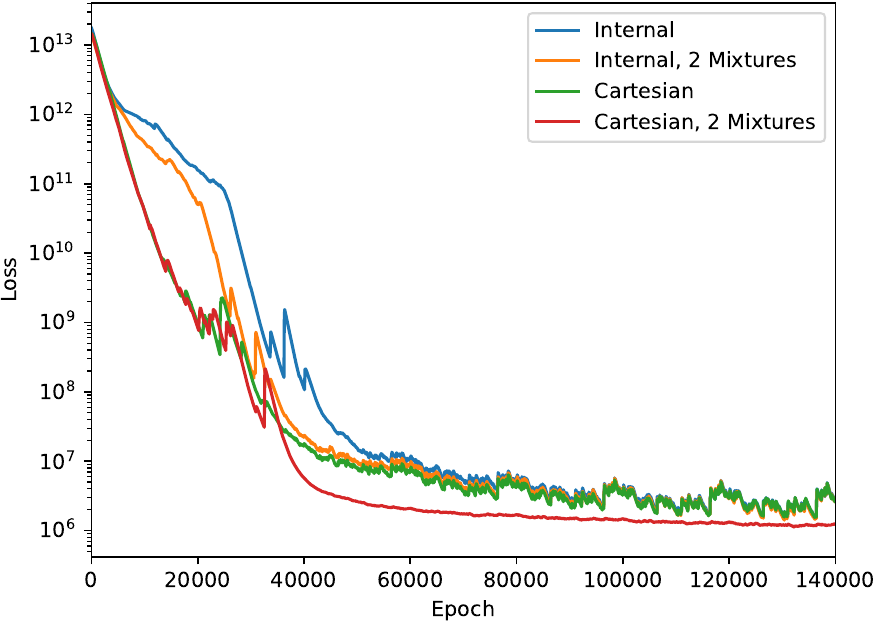}
    \caption{Visualization of the loss for different training setups. These setups are identical to what has been reported in \Cref{table:aldp}.}
    \label{fig:aldp-loss}
\end{figure}

\subsection{Computational Resources} \label{appendix:computational-resources}

All our experiments involving training were conducted on a single NVIDIA A100 80GB. The baselines themselves were computed on a M3 Pro 12-core CPU. 
\section{Societal Impact}
\label{appendix:social_impact}

Our research concerns the efficient sampling of transition paths which are crucial for a variety of tasks in biology, chemistry, materials science and engineering. Our research could potentially benefit research areas from combustion, catalysis, protein design to battery design. Nevertheless, we do not foresee special potential negative impacts to be discussed here. 


\end{document}